\def\eqref#1{equation~\ref{#1}}
\def\1{\bm{1}}
\DeclareMathAlphabet{\mathsfit}{\encodingdefault}{\sfdefault}{m}{sl}
\SetMathAlphabet{\mathsfit}{bold}{\encodingdefault}{\sfdefault}{bx}{n}
\DeclareMathOperator*{\argmin}{arg\,min}
\newtheorem{definition}{Definition}
\newtheorem{theorem}{Theorem}
\newtheorem{lemma}{Lemma}
\newcommand{\norm}[1]{||#1||_2}
\newcommand{\normreg}[1]{||#1||}
\newcommand{\normX}{||\mathcal{X}||_2}
\newcommand{\infnormX}{|\mathcal{X}|}
\newcommand{\normW}{||\mathcal{W}||_2}
\newcommand{\hatw}{\tilde{w}_i}
\newcommand{\covx}{X^{T}X}
\newcommand{\vecx}{\text{vec}(X)}
\newcommand{\etalarge}{\eta-\text{large} \; |\mathcal{Y}|}
\newcommand{\covs}{\frac{1}{s}X_S^{T}X_S}
\newcommand{\wstar}{w_{i*}}
\newcommand{\W}{\mathcal{W}}
\newcommand{\defeq}{\vcentcolon=}
\newcommand{\normY}{|\mathcal{Y}|}
\newcommand{\M}{\mathcal{M}}
\newcommand{\algotext}[1]{\texttt{{#1}}}
\title{PRIMO: Private Regression in Multiple Outcomes}
\author{%
  \name Seth Neel \email sneel@hbs.edu \\
  \addr 
  Harvard Business School
}
\begin{document}
\maketitle
\begin{abstract}
 We introduce a new private regression setting we call \emph{Private Regression in Multiple Outcomes} (PRIMO), inspired by the common situation where a data analyst wants to perform a set of $l$ regressions while preserving privacy, where the features $X$ are shared across all $l$ regressions, and each regression $i \in [l]$ has a different vector of outcomes $y_i$. Naively applying existing private linear regression techniques $l$ times leads to a $\sqrt{l}$ multiplicative increase in error over the standard linear regression setting. We apply a variety of techniques including sufficient statistics perturbation ($\algotext{SSP}$) and geometric projection-based methods to develop scalable algorithms that outperform this baseline across a range of parameter regimes. In particular, we obtain \emph{no dependence on l} in the asymptotic error when $l$ is sufficiently large. Empirically, on the task of genomic risk prediction with multiple phenotypes we find that even for values of $l$ far smaller than the theory would predict, our projection-based method improves the accuracy relative to the variant that doesn't use the projection.

\end{abstract}
\section{Introduction}
Linear regression is one of the most fundamental statistical tools used across the applied sciences, for both inference and prediction. In genetics, polygenic risk scores \cite{prs_linear, prs_2} are computed by regressing different phenotypes (observed outcomes of interest like the presence of a trait or disease) onto individual genomic data (SNPs) in order to identify genetic risk factors. In the social sciences, observed societal outcomes like income or marital status might be regressed on a fixed set of demographic features \cite{social}. In many of these cases where the data records  correspond to individuals, there are two aspects of the problem setting that co-occur: 
\begin{itemize}
    \item[\textbf{Aspect 1.}]  The individuals may have a legal or moral right to privacy that has the potential to be compromised by their participation in a study.
    \item[\textbf{Aspect 2.}]  Multiple regressions will be ran using the same set of individual characteristics across each regression with different outcomes, either within the same study or across many different studies.
\end{itemize}
Aspect $1$ has been established as a legitimate concern through both theoretical and applied work. The seminal paper of \cite{homer} showed that the presence of an individual in a genomic dataset could be identified given simple summary statistics about the dataset, leading to widespread concern over the sharing of the results of genomic analyses. In the machine learning setting, where what is being released is a model $w$ trained on the underlying data, there is a long line of research into ``Membership Inference Attacks" \cite{mi_survey, mi}, which given access to $w$ are able to identify which points are in the training set. Over the last decade, differential privacy \cite{privacybook} has emerged as a rigorous solution to the privacy risk posed by Aspect $1.$  In the particular case of linear regression the problem of how to privately compute the optimal regressor has been studied in great detail, which we summarise in Subsection~\ref{subsec:plr}. 
\begin{figure}
    \centering
    \includegraphics[scale=.25]{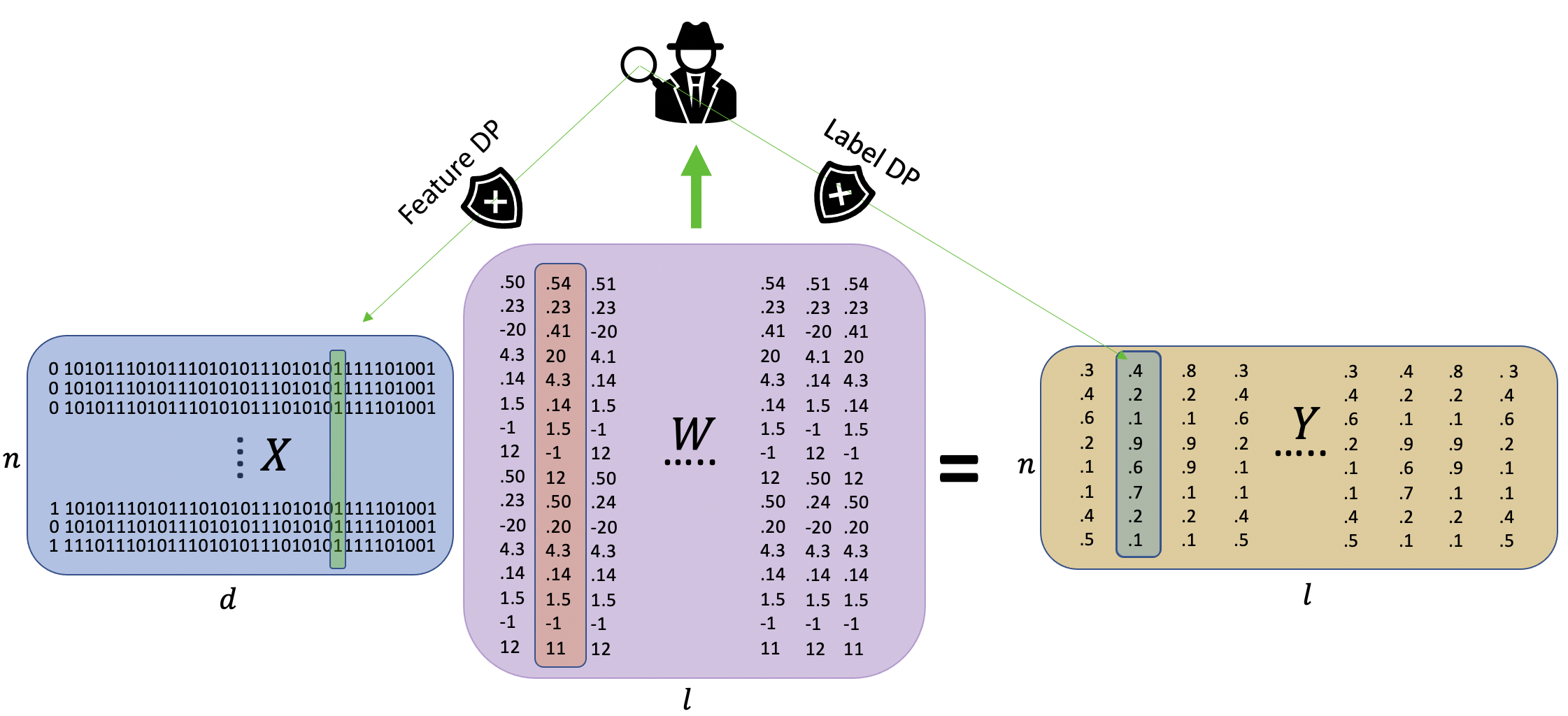}
    \caption {PRIMO imagines the scenario where the weights $W$ computed as a function of sensitive features $X$ and multiple outcomes $Y$ are published or leaked. Our algorithms prevent an adversary with access to $W$ from exposing the underlying sensitive data $x_i, y^i$.}
    \label{fig:my_label}
\end{figure}

Aspect $2$ has been studied extensively from the orthogonal perspective of multiple hypothesis testing, but until now has not been considered in the context of privacy. The problem of overfitting or ``p-hacking'' in the social and natural sciences has been referred to as the ``statistical crisis in science" \cite{gelman}, and developing methods that quantify and mitigate the effects of overfitting has been the subject of much attention in the statistics and computer science communities \cite{ada, ada2, mult}.  
Given the ubiquity of Aspects $1$ and $2$, this raises an important question that we explore in this work: 
\begin{center}
  \textbf{When computing $i = 1 \ldots l$ distinct regressions with a common set of $X$'s and distinct $y_i$'s, what is the optimal accuracy-privacy tradeoff?}
\end{center}


\section{Preliminaries}
We start by defining the standard linear regression problem. Let $X \in \mathcal{X}^{n}\subset \mathbb{R}^{n \times d}$ consist of $d$-dimensional samples from $n$  individuals, $y_i \in \mathcal{Y}^{n}$ be a vector of $n$ outcomes for a regression indexed by $i$, and parameter space $\mathcal{W} \subset \mathcal{R}^d$ denote the subset of linear regression coefficients we will optimize over. 

For $w \in \mathcal{W}$ let $f(w) \defeq \frac{1}{n}\sum_{k = 1}^{n}(w \cdot x_k - y_{ik})^2$ be the linear regression objective, and denote by $\wstar = \argmin_{w \in \mathbb{R}^d}f(w) = (X^{T}X)^{\dagger}X^{T}y_i$, where $\dagger$ is the Moore-Penrose inverse. For $\lambda > 0$ let $f_\lambda(w) = \frac{1}{n}\sum_{k = 1}^{n}(w \cdot x_k - y_{ik})^2 + \lambda \norm{w}^2$ the ridge regression objective, and let $\wstar^\lambda = \argmin f_\lambda(w)_{w \in \mathbb{R}^d} = (\frac{1}{n}X^{T}X + \lambda I_d)^{-1}\frac{1}{n}X^{T}y_i$. 

The definition of data privacy we use throughout is the popular $(\varepsilon, \delta)$-differential privacy introduced in \cite{dwork2}. We refer the reader the to \cite{privacybook} for an overview of the basic properties of $(\varepsilon, \delta)-$DP including closure under post-processing and advanced composition. In order to define a differentially private mechanism, we first define what it means for two datasets to be adjacent. 
\begin{definition}
\label{def:adj}
Two datasets $(X, y), (X', y') \in \mathcal{X}^n \times \mathcal{Y}^n$ are adjacent if they differ in a single element, e.g. there exists $j, j' \in [n]$, such that $X \cup \{x_j'\} \setminus \{x_j\} = X', y \cup \{y_j'\} \setminus \{y_j\} = y'$. We say that $(X, y), (X', y')$ are \emph{feature-adjacent} if they share labels $y=y'$, and $\exists j \in [n]$ such that $X \cup \{x_j'\} \setminus \{x_j\} = X' $. Similarly, they are label-adjacent if $X = X'$, and $\exists j \in [n]$ such that $y \cup \{y_j'\} \setminus \{y_j\} = y'$.    
\end{definition}
We denote adjacency of $(X, y), (X', y')$ by $(X, y) \sim (X', y')$.
\begin{definition}\label{dp}
Let $\M: (\mathcal{X} \times \mathcal{Y})^n \to \mathcal{O}$ a randomized algorithm taking as input a dataset of $n$ records. $\M$ is $(\varepsilon, \delta)$-DP if $\;\forall$ pairs of adjacent datasets $(X, Y) \sim (X', Y'), O \subset \mathcal{O}$:
\begin{equation}\label{eq:dp}
\Pr[\M(X, Y) \in O] \leq e^{\varepsilon}\Pr[\M(X', Y') \in O] + \delta
\end{equation}
\end{definition}
When Definition~\ref{eq:dp} holds for adjacency defined over $(x_i,y_i)$ pairs, this is the standard setting of differential privacy, which we call ``Full DP'' in order to differentiate it from its relaxations. In the less restrictive case where Equation~\ref{eq:dp} holds only over pairs of label-adjacent datasets, we are in the well-studied setting of \emph{label differential privacy} \cite{lab1, lab2, lab3}. When Equation~\ref{eq:dp} holds only over pairs of feature-adjacent datasets, we say that we have \emph{feature differential privacy}. We develop specific PRIMO algorithms for these relaxations of DP in Section~\ref{sec:query_release}. We now formalize the \textit{Private Regression In Multiple Outcomes (PRIMO)} problem. 

\begin{definition}{PRIMO.}
\label{def:primo}
Let $x_i \in \mathcal{X} \subset \mathbb{R}^d, y_{ij} \in \mathcal{Y},$ for $i = 1 \ldots n, j = 1 \ldots l$. Let $X_{n \times d}$ the matrix with $i^{th}$ row $x_i$, and let $Y_{n \times l}$ the matrix with $j^{th}$ column $y_j = (y_{1j}, \ldots y_{nj})$. The optimal solution $W^*$ to the PRIMO problem is
$$ W^{*} = \argmin_{W \in \mathcal{W}^{l} \subset \mathbb{R}^{d \times l}} ||XW - Y||_F^2 $$
\end{definition}
Given a randomized algorithm $\mathcal{M}: (\mathcal{X} \times \mathcal{Y}^{l})^n \to \mathcal{W}^l$, we say that $\mathcal{M}$ is an $(\alpha, \beta, \varepsilon, \delta)$ solution to the PRIMO problem if (i) $\mathcal{M}$ is $(\varepsilon, \delta)$-DP, and (ii) with probability $1-\beta$ over $\tilde{W} \sim \mathcal{M}$: 
$$\frac{1}{nl}||X\tilde{W} - Y||_F^2 - \frac{1}{nl}||XW^* - Y||_F^2 < \alpha$$

We will use $y_j$ to denote the vector of $n$ outcomes for the $j^{th}$ outcome, and $y^i \in \mathcal{Y}^l$ for the vector of $l$ outcomes corresponding to individual $i \in [n]$. We note that in PRIMO under full DP, the adjacency condition holds over pairs $(x_i, y^i)$. The $(\epsilon, \delta)$-DP algorithm we will use most throughout is the Gaussian Mechanism. We state a version of the Gaussian mechanism with constant $c(\epsilon, \delta)$ that is valid for all $\epsilon > 0$,  which follows from analyzing the mechanism using Renyi DP \cite{renyi} and converting back to $(\epsilon, \delta)$-DP.

\begin{lemma}[\cite{privacybook}]
\label{lem:gauss}

Let $f: \mathcal{X}^{n} \to \mathbb{R}^{d}$ an arbitrary $d$-dimensional function, and define it's sensitivity $\Delta_2(f) = \sup_{X \sim X'}\norm{f(X)- f(X')}$, where $X \sim X'$ are datasets that differ in exactly one element. Then the Gaussian mechanism $\algotext{GaussMech}(\varepsilon, \delta, \Delta)$ releases $f(X) + \mathcal{N}(0, \sigma^2$), and is $(\varepsilon, \delta)$-differentially private for $\sigma \geq c(\varepsilon, \delta)\Delta_2(f)$, where $c(\varepsilon, \delta) = \sqrt{2(\frac{1}{\epsilon} + \frac{\log \frac{1}{\delta}}{\varepsilon^2}})$.
\end{lemma}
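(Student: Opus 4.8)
The plan is to establish the privacy guarantee through the Rényi differential privacy (RDP) route, since this is precisely what yields a noise constant $c(\varepsilon, \delta)$ valid for all $\varepsilon > 0$ rather than only the small-$\varepsilon$ regime to which the classical $(\varepsilon, \delta)$ analysis of the Gaussian mechanism is restricted. First I would invoke the closed-form Rényi divergence between two spherical Gaussians with a common covariance: for any order $\alpha > 1$, $D_\alpha\big(\mathcal{N}(\mu, \sigma^2 I) \,\|\, \mathcal{N}(\mu', \sigma^2 I)\big) = \frac{\alpha \norm{\mu - \mu'}_2^2}{2\sigma^2}$. Taking $\mu = f(X)$ and $\mu' = f(X')$ for adjacent $X \sim X'$ and bounding $\norm{\mu - \mu'}_2 \leq \Delta_2(f) = \Delta$ shows that releasing $f(X) + \mathcal{N}(0, \sigma^2 I)$ is $(\alpha, \rho)$-RDP with $\rho = \frac{\alpha \Delta^2}{2\sigma^2}$.

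Next I would apply the standard RDP-to-$(\varepsilon, \delta)$ conversion of \cite{renyi}: any $(\alpha, \rho)$-RDP mechanism is $\big(\rho + \frac{\log(1/\delta)}{\alpha - 1},\, \delta\big)$-DP for every $0 < \delta < 1$ and $\alpha > 1$. Substituting $\rho = \frac{\alpha \Delta^2}{2\sigma^2}$ and writing $\sigma = c\Delta$ reduces the whole problem to choosing the order $\alpha$ and the constant $c$ so that $\frac{\alpha}{2c^2} + \frac{\log(1/\delta)}{\alpha - 1} \leq \varepsilon$, after which monotonicity in $\sigma$ extends the conclusion to all $\sigma \geq c(\varepsilon,\delta)\Delta$.

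The remaining step is the tuning, which I would carry out with a single explicit choice rather than a full optimization over $\alpha$. Setting $\alpha = 1 + \frac{2\log(1/\delta)}{\varepsilon}$ makes the second term exactly $\varepsilon/2$; it then suffices to check that the proposed $c^2 = 2\big(\frac{1}{\varepsilon} + \frac{\log(1/\delta)}{\varepsilon^2}\big)$ forces the first term $\frac{\alpha}{2c^2}$ to be at most $\varepsilon/2$, and a short algebraic manipulation reduces this to the trivially true inequality $0 \leq \varepsilon$. Summing the two halves gives the claimed $(\varepsilon, \delta)$ guarantee. The only genuine subtlety — and the reason for routing through RDP instead of quoting the textbook Gaussian mechanism — is that the classical bound degrades for $\varepsilon > 1$, so the one point requiring care is verifying that this explicit Rényi order clears the budget uniformly in $\varepsilon$; once the order is guessed, the argument is clean and requires no calculus.
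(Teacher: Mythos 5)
Your proposal is correct and follows exactly the route the paper indicates for this lemma: compute the R\'enyi divergence $\frac{\alpha\Delta^2}{2\sigma^2}$ between the two shifted Gaussians, apply the RDP-to-$(\varepsilon,\delta)$ conversion, and tune $\alpha = 1 + \frac{2\log(1/\delta)}{\varepsilon}$ so that each of the two terms contributes at most $\varepsilon/2$ (your algebra reducing the first term's bound to $0 \leq \varepsilon$ checks out). The paper does not spell out this derivation --- it only cites \cite{renyi} and \cite{privacybook} and remarks that the constant follows from the RDP analysis --- so your writeup is a faithful and complete expansion of the intended argument.
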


Lastly, we define some notation. Given a vector $v$ and matrix $A$, $\norm{v}, \norm{A}$ denote the $l_2$ and the spectral norm respectively, $||A||_F$ is the Frobenius norm, $\infnormX = \sup_{x \in \mathcal{X}}||x||_{\infty}, \normY = ||y||_{\infty}$, and $\norm{\hat{w}} = \sqrt{\frac{1}{l}||W^*||^2_F}$, where $W^*$ is the optimal PRIMO solution in Definition~\ref{def:primo}. Following convention in prior work, we will let $\normX, \norm{\mathcal{Y}}, \normW$ denote $\sup_{x \in \mathcal{X}}\norm{x}, \sup_{y \in \mathcal{Y}}\norm{y}, \sup_{w \in \mathcal{W}}\norm{w}$ respectively.

\subsection{Private Linear Regression}
\label{subsec:plr}

Private linear regression is well-studied under a variety of different assumptions on the data generating process and parameter regimes. Typically analysis of private linear regression is done either under the fully agnostic setting where only parameter bounds $\normX, \normY, \normW$ are assumed, or under the assumption of a fixed design matrix and $y$ generated by a linear Gaussian model (the so-called realizable case), or under the assumption of a random design matrix from a known distribution \cite{rand_design}. In this paper we focus on the first fully agnostic setting, because in our intended applications within the social and biomedical sciences in general we neither have realizability (when $y$ is actually a linear function of $x$) or Gaussian features. In the fully agnostic setting \cite{revisit} provides a comprehensive survey of existing private regression approaches and bounds, including a discussion of the impact of different parameter norms.

Broadly speaking, techniques for private linear regression fall into $4$ classes: sufficient statistics perturbation (\algotext{SSP}) \cite{ssp1, ssp2}, Objective Perturbation (\algotext{ObjPert}) \cite{obj}, Posterior sampling \cite{sampling}, and privatized (stochastic) gradient descent (\algotext{NoisySGD}) \cite{sgd}. The methods in this paper are a sub-class of \algotext{SSP}-based methods, which correspond to Algorithm~\ref{alg:reusecov} where $l = 1$. Methods based on $\algotext{SSP}$ rely on perturbing $X^T{X} = \Sigma$ and $X^{T}y_i = \Sigma_{Xy}$ separately with matrices of Gaussian noise (denoted by $E_1, E_{2i}$ respectively), and then using these noisy estimates to compute the (regularized) least squares estimator:
\begin{equation}\label{eq:ssp_decomp}
\tilde{w}_{i*} = \underbrace{(\covx + \lambda I + E_1)^{-1}}_{\text{noisy covariance term } \tilde{\Sigma}} \; \; \; \times \underbrace{(X^{T}y_i + E_{2i})}_{\text{noisy association term } \tilde{\Sigma}_{Xy}}
\end{equation} 

In their Theorem~$5.3$ \cite{priv_lb} give a lower bound on the mean squared error of \emph{any} $(\epsilon, \delta)$-DP algorithm for optimizing a Lipschitz objective function. In the case of linear regression (which corresponds to PRIMO with $l = 1$), if $w_{private}$ denotes the output of any $(\epsilon, \delta)$-DP algorithm, then with probability at least $\frac{1}{3}$:
\begin{equation}
\label{eq:lb}
    \alpha = f(w_{private}) - f(w_{i*}) \geq \min \{\norm{\mathcal{Y}}^2,  \frac{\sqrt{d}(\norm{\mathcal{X}}^2\norm{\mathcal{W}}^2 +  \norm{\mathcal{X}}\norm{\mathcal{W}}\norm{\mathcal{Y}}}{n\epsilon} \}
\end{equation}

\cite{cai2020cost} prove a lower bound that is tailored for realizable low-dimensional ($n \geq d)$ linear regression, where $y \sim x^{t}\beta + N(0, \sigma^2)$, with $\norm{x} \leq 1$ and $||X^{T}X||_{\infty} = O(1/d)$, showing that $\alpha = \Omega(\sigma^2 (\frac{d}{n} + \frac{d^2}{n^2\epsilon^2}))$. Neither of these bounds are exactly applicable to our setting; the bound of \cite{priv_lb} is not neccessarily tight for linear regression, and the bound of \cite{cai2020cost} makes realizability assumptions that clearly fail to hold for the type of genomic applications that motivate our work. We also note that all existing lower bounds for private linear regression are under full DP, and so do not immediately apply to the algorithms in Section~\ref{sec:query_release} which operate in the weaker feature-DP or label-DP settings. In general, the focus of this work is on upper bounds, and so we highlight this curious lack of relevant lower bounds in the literature as a potential future direction, given the long line of work on upper bounds for private linear regression. Acknowledging the above issues with existing bounds, throughout the paper we'll use the lower bound of \cite{priv_lb} as a benchmark for the cost to accuracy of taking $l > 1$, and in settings in which the bounds for PRIMO match this lower bound we will say we have ``PRIMO for Free.'' 

Given any $(\varepsilon, \delta)$-DP algorithm for computing $w_j$ privately, we can use it as a sub-routine to solve PRIMO by simply running it $l$ times to compute each column of $W$. Hence by running any of the optimal algorithms (for example \algotext{SSP} \cite{ssp1}) $l$ times with parameters $\varepsilon' \approx \varepsilon/\sqrt{l}, \delta' \approx \delta/l$, by advanced composition for differential privacy \cite{privacybook} we can achieve MSE, subject to $(\varepsilon, \delta)$-DP:
\begin{equation}\label{eq:naive_baseline}
\alpha = \tilde{O}(\frac{\sqrt{l d}(\norm{\mathcal{X}}^2\norm{\W}^2 + \norm{\mathcal{X}}\norm{\W}\normY}{n\varepsilon}),
\end{equation}
where the $\tilde{O}$ hides factors of $\log(\frac{1}{\delta})$.
So for a fixed privacy budget $\varepsilon$, this naive baseline is a factor of $\sqrt{l}$ worse than in the standard private regression setting where $l = 1$. 

Further background on private query release for linear and low-sensitivity queries (relevant to our techniques in Section~\ref{sec:query_release}) under $l_\infty$ and $l_2$ error, and on sub-sampled linear regression (Subsection~\ref{sec:sub_ssp}) is included in the Appendix.

\section{Results}\label{subsec:results}
The primary contribution of this work is to introduce the novel Private Regression in Multiple Outcomes (PRIMO) problem, and to provide a class of algorithms that trade off accuracy, privacy, and computation. In addition to introducing the PRIMO problem, to our knowledge we are the first to apply private query release methods to linear regression (Section~\ref{sec:query_release}). We also provide a compelling practical application of our algorithms to the problem of genomic risk prediction with multiple phenotypes using data from The $1000$ Genomes Project \cite{1KG} and The Database of Genotypes and Phenotypes \cite{dbgap}.

\begin{table}[!ht]
\centering
     \begin{tabular}{ |p{4cm}|p{3.5cm}|p{1cm}|p{1cm}|p{1cm}|}
     \hline
     \multicolumn{5}{|c|}{Guarantees of \algotext{ReuseCov}} \\
     \hline
     $\quad \quad  \# l$ of regressions & $\quad \quad \alpha = \text{MSE}$ & $\text{cost}(l)$ & $\mathcal{M}$ & Privacy\\
     \hline
     
    $l < \min(\frac{n}{\sqrt{d}}, \frac{\normX^2 \normW^2}{\normY^2})$ & $\tilde{O}\left(\frac{\sqrt{d}\normX^2 \norm{\hat{w}}^2}{n}\right)$ & 1 & \algotext{Gauss} & Full DP \\
    
    $l \in (\frac{\normX^2 \normW^2}{\normY^2},\frac{n}{\sqrt{d}})$ &
    $\tilde{O}\left(\frac{\norm{\hat{w}}\sqrt{ld}\normY \normX}{n} \right)$ & $\sqrt{l}$ & $\algotext{Gauss}$ & Full DP\\
    
    $l > \frac{n}{\sqrt{d}}, n < \frac{\sqrt{d}\normX^2\normW^2}{\normY^2}$   & $\tilde{O}\left(\frac{\sqrt{d}\normX^2 \norm{\hat{w}}^2}{n} \right)$    & 1 & $\algotext{Proj}_Y$ & Feature DP \\
    
    $l > \frac{n}{\sqrt{d}}, n > \frac{\sqrt{d}\normX^2\normW^2}{\normY^2}$   & $\tilde{O}\left(\frac{\norm{\hat{w}}{d}^{1/4}\normY \normX}{\sqrt{n}} \right)$    & $\frac{\sqrt{n}}{d^{1/4}}$ & $\algotext{Proj}_Y$ & Feature DP \\
    
   $\quad \quad \quad \quad l > n^{\frac{2}{3}}$   & $\tilde{O}\left(\norm{\hat{w}}\frac{|\mathcal{X}|\norm{Y}\sqrt{d}}{\sqrt{n}l^{1/4}}\right)$    & $l^{1/4}\sqrt{n}$ & $\algotext{Proj}_X$ & Label DP \\
     \hline
    \end{tabular}
    \caption{In the table above $\text{cost}(l)$ is the ratio of the PRIMO MSE $\alpha$ to the lower bound in Equation~\ref{eq:lb}, and $\mathcal{M}$ denotes the mechanism used to compute the association term $X^{T}Y$ in Algorithm~\ref{alg:reusecov}.}
             \label{tbl:summary}
\end{table}

Algorithm~\ref{alg:reusecov} is our meta-algorithm for the PRIMO problem, and has two variants, \algotext{ReuseCovGauss} which corresponds to $\mathcal{M} = \algotext{GaussMech}$, and $\algotext{ReuseCovProj}$, which corresponds to $\mathcal{M}$ = Algorithm~\ref{alg:proj}. Algorithm~\ref{alg:proj} itself has two variants, which correspond to the label private setting ($\algotext{GaussProj}_X$), and to the feature private setting ($\algotext{GaussProj}_Y$) respectively. For any given setting of the problem parameters ($n, l, d, \normX, \norm{W}, \normY$) one of these variants achieves superior performance, in terms of the upper bound we can prove on the error $\alpha$. Which variant achieves the lowest $\alpha$ depends on the value of $l$ relative to the other parameters, and the type of DP (Full, Feature, or Label) required. In Table~\ref{tbl:summary} we summarize the theoretical upper bound on $\alpha$ achieved by the best PRIMO variant in each setting. In Sections~\ref{sec:ssp}, \ref{sec:query_release} we prove these upper bounds for each variant.

In Section~\ref{sec:ssp} we adapt the previously proposed sufficient statistics perturbation ($\algotext{SSP}$) algorithm of \cite{ssp1, ssp2} into the \algotext{ReuseCovGauss} Algorithm. Via a novel accuracy analysis of $\algotext{SSP}$ for the case when the privacy levels for $E_{1}, E_{2i}$ differ in Equation~\ref{eq:ssp_decomp} (Theorem~\ref{thm:ssp-acc-full}), we show that since the noisy covariance matrix is reused across the regressions (as it only depends on $X$), by allocating the majority of our privacy budget to computing this term, we are able to obtain PRIMO for Free when $l$ is sufficiently small. This happens because the error term is dominated by the error in computing the noisy covariance matrix, which does not depend on $l$, rather than the error from computing the noisy association term (row $1$ of Table~\ref{tbl:summary}). When $l > \frac{\normX^2 \normW^2}{\normY^2}$, the asymptotic error of $\algotext{ReuseCovGauss}$ is dominated by the error in computing the noisy association term, which has a $\sqrt{l}$ dependence in the error on $l$ (row $2$ of Table~\ref{tbl:summary}). Given this, it is natural to ask if, under parameter regimes where the error from the association term dominates, can we obtain improved dependence of $\alpha$ on $l$ over the $\sqrt{l}$ given by $\algotext{ReuseCovGauss}$?

We answer this question in the affirmative in Section~\ref{sec:query_release} with the introduction of the \algotext{ReuseCovProj} algorithms. The key idea is that instead of privately computing $\frac{1}{n}X^{T}Y$ via the Gaussian Mechanism (Lemma~\ref{lem:gauss}), which necessarily has error that scales like $\sqrt{l}$, in the setting where \emph{either} $X$ or $Y$ are not considered private, projecting the noisy value $\frac{1}{n}X^{T}Y + \mathcal{N}(0, \sigma^2)$ returned by the Gaussian Mechanism onto the space of feasible values that the non-private quantity $\frac{1}{n}X^{T}Y$ could take (Line~$5$ of Algorithm~\ref{alg:proj}), can reduce the error for sufficiently large $l$. This set of ``feasible'' values is the image of the domain of $Y$ under $X^{T}$ when $X$ is considered public, and it is the image of the domain of $X$ under $Y^{T}$ when $Y$ is considered public. The assumption that either $X$ or $Y$ are public are critical to these algorithms, as computing the projection depends on the space we are projecting into in a way that would not be easily privatized. The setting where $X$ is public and $Y$ is private corresponds to feature DP, and the setting where $X$ is public and $Y$ is public is label differential privacy. In rows $3-5$ in Table~\ref{tbl:summary} we summarise the guarantees of Theorems~\ref{thm:ssp-query-acc},\ref{thm:ssp-query-acc-label} that characterize the accuracy in these settings. 
In both the Feature DP and Label DP settings, the projection improves the dependence in the error on $l, d$, at the cost of a factor of $\sqrt{n}$ in the denominator. As a result, until $l, d$ are sufficiently large relative to $n$ we'd expect the projection to actually increase error. We discuss these bounds further in Section~\ref{sec:query_release}.

In Section~\ref{sec:experiments}, we implement our $\algotext{ReuseCovGauss}$ and $\algotext{ReuseCovProj}$ algorithms using SNP data from two of the most common genomic databases \cite{dbgap, 1KG}. We compare our algorithms to each other, and to the non-private baseline. In both settings we reach the surprising conclusion that even for relatively small values of $l = 11, 100$, much smaller than the theory would predict, $\algotext{ReuseCovProj}$ outperforms $\algotext{ReuseCovGauss}$. Moreover, $\algotext{ReuseCovProj}$ is able to achieve non-trivial MSE (as measured by $R^2$) for very large values of $l$, as predicted by Theorem~\ref{thm:ssp-query-acc}. We also compare both algorithms to the baseline of running a single private regression $l$ times (via \algotext{DP-SGD} \cite{dp-sgd}) and composing the privacy loss. \algotext{DP-SGD} performs significantly worse than any PRIMO algorithm unless we are in the setting of Row $2$ in Table~\ref{tbl:summary} where we expect no benefit from \algotext{ReuseCovGauss} or \algotext{ReuseCovProj} over \algotext{SSP}, consistent with the theory.


\section{Full DP: The \algotext{ReuseCovGauss} Algorithm}
\label{sec:ssp}


We start by presenting our first algorithm for PRIMO, $\algotext{ReuseCovGauss}$, which corresponds to Algorithm~\ref{alg:reusecov} with $\mathcal{M} = \algotext{GaussMech}$. To understand the intuition behind $\algotext{ReuseCovGauss}$, we start from the analysis of the ridge regression variant of $\algotext{SSP}$ in \cite{revisit}. Equation $13$ in \cite{revisit} shows that if $\tilde{w}_{i}$ is the noisy ridge regressor output by $\algotext{SSP}$, and $\wstar$ is the (non-private) OLS estimator, then w.p. $1-\rho$:
\begin{equation}
\label{eq:ssp_error_terms}
 f(\tilde{w}_{i})-f(\wstar) = \tilde{O}\left(\underbrace{\frac{d}{\lambda \epsilon^2}\normX^2\normY^2}_{\text{association error term}}+ \underbrace{\frac{d}{\lambda\epsilon^2}\normX^4\norm{\W}^2}_{\text{covariance error term}}\right) + \underbrace{\lambda \norm{\W}^2}_{\text{error due to ridge penalty}}
\end{equation}

Inspecting these terms, we see that when $\norm{\mathcal{X}}\norm{\W} \gg \norm{\mathcal{Y}}$ the error is dominated by the covariance error term, which is the error due to random noise injected when privately computing $X^{T}X$, rather than the association error term, which is error due to random noise injected when privately computing $X^{T}y_i$. We now turn back to our PRIMO setting, and imagine independently applying $\algotext{SSP}$ to solve each of our $i = 1\ldots l$ regression problems. By the lower bounds in \cite{priv_lb}, given a fixed privacy budget this incurs at least a $\sqrt{l}$ multiplicative blow up in error. However, we notice that when our private linear regression subroutine is an $\algotext{SSP}$ variant, this naive scheme of running $l$ independent copies of $\algotext{SSP}$ is grossly wasteful. Since the $X$ matrix is shared across all the regressions, we can simply compute our noisy estimate of $X^{T}X$ once, and then reuse it across all the regressions. Combining these two observations, we present Algorithm~\ref{alg:reusecov}. $\M$ can be any $(\epsilon/2, \delta/2)$-DP algorithm for estimating $X^{T}Y$, but when $M = \algotext{GaussMech}$ we call Algorithm~\ref{alg:reusecov} $\algotext{ReuseCovGauss}$.

\begin{algorithm}[!h]{$\algotext{ReuseCov}$}
\caption{Input:  $n, \lambda, X \in \mathcal{X}^n \subset \mathbb{R}^{d \times n}$, $Y = [y_1, \ldots y_{l}] \in \mathcal{Y}^{l \times n},$ privacy params: $\epsilon, \delta$}
\label{alg:reusecov}
\begin{algorithmic}[1]
\State Draw $E_1 \sim N_{d(d+1)/2}(0, \sigma_1^2),$ where $\sigma_1 =  \frac{1}{n}c(\epsilon, \delta)\normX^2$
\State Compute $\hat{I} = (\frac{1}{n}X^{T}X + E_1 + \lambda I)$
\State Draw $\hat{g} = [\hat{g}_1, \ldots \hat{g}_l] \sim \M(\epsilon/2, \delta/2, X, Y)$ \Comment{$\mathcal{M} = \algotext{GaussMech}$ or Algorithm~\ref{alg:proj}}
\For $i = 1 \ldots l$ 
    \State Set $\tilde{w_i} = \hat{I}^{-1}\hat{g}_i$ \Comment{In practice computed via $\hat{I} = \algotext{QR}$ (Algorithm~\ref{alg:reusecov_emp})}
\EndFor  
\State Return $\tilde{W} = [\tilde{w_1}, \ldots \tilde{w_{l}}]$
\end{algorithmic}
\end{algorithm}

\begin{restatable}{theorem}{sspaccfull}
\label{thm:ssp-acc-full}
With $\M = \algotext{GaussMech}(\epsilon/2, \delta/2, \Delta =  \frac{1}{n}\sqrt{l}\normX \normY)$, Algorithm~\ref{alg:reusecov} is an $(\alpha, \rho, \epsilon, \delta)$ solution to the PRIMO problem with 

\begin{equation}
\alpha = \tilde{O}\left(\norm{\hat{w}}\sqrt{\frac{d\normX^4 \norm{\hat{w}}^2}{n^2} + \frac{ld\normY^2 \normX^2}{n^2}} \right),
\end{equation}
where $\norm{\hat{w}}^2 = \frac{1}{l}\normreg{W^{*}}_F^2$, and $\tilde{O}$ omits terms polynomial in $\frac{1}{\epsilon}, \log(1/\delta), \log(1/\rho)$.
\end{restatable}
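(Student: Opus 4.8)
The plan is to prove privacy and accuracy separately. The privacy argument is routine composition-plus-post-processing; the accuracy argument is the real content, and amounts to a single-regression SSP perturbation bound carried out with \emph{distinct} noise levels for the covariance and association terms, then averaged over the $l$ outcomes.

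For privacy, observe that Algorithm~\ref{alg:reusecov} touches the data only through two Gaussian-mechanism calls: the perturbation $E_1$ of the normalized covariance $\frac{1}{n}\covx$ in Line~2, and the call $\M = \algotext{GaussMech}$ on $\frac{1}{n}X^TY$ in Line~3. Adding or removing one record changes a single row $x_k$ together with its outcome row, so the Frobenius sensitivity of $\frac{1}{n}\covx$ is $O(\normX^2/n)$, while that of the \emph{full} association matrix $\frac{1}{n}X^TY$ is $\frac{1}{n}\sqrt{l}\,\normX\normY$ — the $\sqrt{l}$ being exactly the payoff of computing all $l$ inner products in one shot, since the changed record perturbs $l$ entries (one per outcome), each of magnitude $\normX\normY/n$. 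By Lemma~\ref{lem:gauss} each call is $(\epsilon/2,\delta/2)$-DP at the stated $\sigma_1$ and $\Delta$; basic composition gives $(\epsilon,\delta)$-DP, and the inverse $\hat{I}$ and the products $\hat w_i = \hat I \hat v_i$ in Lines~2 and~5 are post-processing.

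For accuracy I would first reduce the PRIMO loss to a per-regression average. Writing $U = \tilde W - W^*$ with columns $u_i$ and using first-order optimality of each $\wstar$ (the normal equations kill the cross term), I get $l^2_W(X,Y) = \frac{1}{l}\sum_{i=1}^l u_i^T\hat\Sigma u_i$ with $\hat\Sigma = \frac{1}{n}\covx$. The heart of the argument is a perturbation bound for each $u_i$ that keeps the two noise scales separate. Setting $\tilde A = \hat\Sigma + E_1 + \lambda I$, from $\tilde A\tilde w_i = \frac{1}{n}X^Ty_i + E_{2i}$ and $\hat\Sigma\wstar = \frac{1}{n}X^Ty_i$ I derive $\hat\Sigma u_i = E_{2i} - E_1(\wstar + u_i)$ (taking $\lambda = 0$; the $\lambda$ term contributes only the ridge penalty of Equation~\ref{eq:ssp_error_terms}), hence $u_i^T\hat\Sigma u_i = \langle u_i, E_{2i}\rangle - \langle u_i, E_1\wstar\rangle - \langle u_i, E_1 u_i\rangle$. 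The key modeling choice — and what distinguishes this from the inverse-eigenvalue bound of Equation~\ref{eq:ssp_error_terms} — is to control $\norm{u_i}$ through the bounded parameter space $\W$ (projecting $\tilde w_i$ onto $\W$ so $\norm{u_i} = O(\norm{\wstar})$) rather than through $\norm{\tilde A^{-1}}$, which is what yields the agnostic $\sqrt{d}/n$ scaling with no dependence on the conditioning of $\covx$. Aggregating, I sum over $i$ and apply Cauchy--Schwarz in the Frobenius inner product, $\sum_i\langle u_i, E_{2i}\rangle \le \norm{U}_F\,\norm{E_2}_F$ and $\sum_i\langle u_i, E_1\wstar\rangle \le \norm{E_1}\,\norm{U}_F\,\norm{W^*}_F$, and similarly for the quadratic term; using this matrix form instead of a union bound over the $l$ regressions is precisely what keeps the association error at $\sqrt{l}$ and the covariance error $l$-independent. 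Invoking a $\chi^2$ tail for $\norm{E_2}_F \lesssim \sigma_2\sqrt{dl}$ and the operator-norm bound $\norm{E_1}\lesssim \sigma_1\sqrt{d}$ for a symmetric Gaussian matrix, together with $\norm{W^*}_F^2 = l\,\norm{\hat{w}}^2$ and $\norm{U}_F = O(\sqrt{l}\,\norm{\hat{w}})$, then dividing by $l$ and substituting $\sigma_1 \asymp \normX^2/(n\epsilon)$ and $\sigma_2 \asymp \sqrt{l}\,\normX\normY/(n\epsilon)$, collapses the three terms into a covariance contribution $\frac{\sqrt{d}\,\normX^2\norm{\hat{w}}^2}{n}$ and an association contribution $\frac{\sqrt{ld}\,\normX\normY\norm{\hat{w}}}{n}$, whose sum is the claimed $\alpha$ after factoring out $\norm{\hat{w}}$.

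I expect the main obstacle to be the perturbation/conditioning step: making the bound on $u_i^T\hat\Sigma u_i$ valid in the ill-conditioned regime without the spurious $1/\lambda_{\min}$ factor, which forces the $\W$-boundedness route above and the replacement of the a priori radius $\normW$ by the effective $\norm{\hat{w}}$; and then upgrading the random-matrix estimates from the in-expectation form used by \cite{proj} to the high-probability statement demanded by the $(\alpha,\rho,\epsilon,\delta)$ guarantee, while checking that the Cauchy--Schwarz and tail bookkeeping inflate only the hidden polylog factors (in $l$, $1/\rho$, $1/\delta$) and not the stated powers of $d$, $l$, and $n$.
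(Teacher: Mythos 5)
Your privacy argument, your reduction of the PRIMO loss to $\frac{1}{l}\sum_i u_i^T\hat\Sigma u_i$ via the normal equations, your perturbation identity $(\hat\Sigma+E_1)u_i = E_{2i}-E_1\wstar$ (which is exactly the paper's Woodbury-type expansion, Lemma~\ref{lem:sherman}), and your noise calibration $\sigma_1\asymp\normX^2/(n\epsilon)$, $\sigma_2\asymp\sqrt{l}\normX\normY/(n\epsilon)$ all match the paper. The place where you diverge is precisely the step you flag as the "main obstacle," and as written it is a genuine gap rather than a bookkeeping issue.

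The problem is the $\lambda=0$ plus project-onto-$\W$ route. First, it analyzes a different algorithm: Theorem~\ref{thm:ssp-acc-full} is about Algorithm~\ref{alg:reusecov}, which outputs $\hat{I}\hat{v}_i$ with an explicit ridge term $\lambda I$ and no projection, so a bound for a projected estimator does not prove the stated theorem. Second, even for the modified estimator the argument is circular: the identity $u_i^T\hat\Sigma u_i=\langle u_i,E_{2i}\rangle-\langle u_i,E_1\wstar\rangle-\langle u_i,E_1u_i\rangle$ holds only for the \emph{unprojected} error, whose norm is exactly what is uncontrolled when $\hat\Sigma+E_1$ is ill-conditioned or singular (with $\lambda=0$ it need not even be invertible); after projecting, $\norm{u_i}\le 2\normW$ holds but the identity no longer does, and Euclidean projection onto $\W$ does not contract the $\hat\Sigma$-seminorm in which the excess risk is measured, so you cannot transfer the bound back. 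The paper escapes this by keeping $\lambda>0$: Lemma~\ref{matrixneq} gives $\norm{E_1}\le(\lambda_{\min}+\lambda)/2$ w.h.p.\ so that $(\covx+\lambda I+E_1)^{-1}\preceq 2(\covx+\lambda I)^{-1}$, the JL-type Lemma~\ref{lem:jl} with $\mathrm{tr}((\covx+\lambda I)^{-1})\le d/(\lambda_{\min}+\lambda)$ bounds the two noise terms by $\tilde{O}\bigl(\frac{d}{\lambda}(\sigma_1^2\norm{\wstar}^2+\sigma_2^2)\bigr)$, one pays the bias $\lambda\norm{\wstar}^2$, and optimizing over $\lambda$ produces the $\sqrt{d}$ scaling and replaces $\normW$ by the effective $\norm{\hat{w}}$ --- the two things your sketch hopes for but does not obtain. (A $\W$-boundedness route can be made rigorous if you replace projection by constrained minimization of the \emph{noisy} quadratic objective over $\W$, comparing $\hat{L}(\hat{w})\le\hat{L}(\wstar)$, but that is again a different algorithm and yields $\normW$ rather than $\norm{\hat{w}}$.) To prove the theorem as stated, keep $\lambda$ and follow the trace-plus-optimization route.
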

\begin{proof} 
The privacy proof follows from a straightforward application of the Gaussian mechanism. We note that releasing each $\hat{g}_i$ privately, is equivalent to computing $X^{T}Y + E_2$, where $E_2 \sim N_{d \times l}(0, \sigma_2^2)$. Now it is easy to compute $l$-sensitivity $\Delta(f)$ of $f(X) = X^{T}Y$. Fix an individual $i$, and an adjacent dataset $X' = X / \{x_i, y_i\} \cup \{x_i', y_i'\}$. Then $f(X)-f(X') = \Delta_V = [y_{i1}x_i - y_{i1}'x_{i1}', \ldots y_{il}x_i - y_{il}'x_{il}']$. Then: 
$$
\norm{f(X) - f(X')} = \sqrt{||\Delta_V||^2_F} = \sqrt{\sum_{j = 1}^{l}\norm{y_{ij}x_i - y_{ij}'x_{ij}'}^2 }\leq \sqrt{l \cdot 4 \norm{\mathcal{X}}^2\norm{\mathcal{Y}}^2} = 2 \sqrt{l} \norm{\mathcal{X}}\norm{\mathcal{Y}}
$$
Hence setting $\sigma_2 = c(\epsilon, \delta) 2\sqrt{l}\norm{\mathcal{X}}\norm{\mathcal{Y}}/\epsilon$ by the Gaussian mechanism \cite{privacybook} publishing $\hat{g}$ satisfies $(\epsilon/2, \delta/2)-DP$. Similarly if $g = X^{T}X$, $\Delta(g) \leq \norm{X}^2$, and so setting $\sigma_1 = c(\epsilon, \delta)\norm{X}^2/\epsilon$, means publishing $\hat{I}$ is $(\epsilon/2, \delta/2)$-DP. By basic composition for DP, the entire mechanism is $(\epsilon, \delta)$-DP.

To prove the accuracy bound we follow the general proof technique developed in \cite{revisit} analysing the accuracy guarantees of the ridge regression variant of $\algotext{SSP}$ in the case $\lambda_{\min}(X^{T}X) = 0$, adding some mathematical detail to their exposition, and doing the appropriate book-keeping to handle our setting where the privacy level (as a function of the noise level) guaranteed by $E_1$ and $E_2$ differ. The reader less interested in these details can skip to Equation~\ref{eq:final} below for the punchline. 

Fix a specific index $i \in [l]$, and let $y = y_i$. We will analyze the prediction error of $\tilde{w_i}$ e.g. $f(\tilde{w}_i) - f(w_i^*)$. Then the following result is stated in \cite{revisit} for which provide a short proof:

\begin{lemma}[\cite{revisit}]\label{lem:prederror}
$$
f(\hatw) - f(\wstar) = \frac{1}{n}(||y-X\hatw||^2 - ||y-X\wstar||) = \frac{1}{n}||\hatw - \wstar||^2_{X^{T}X}
$$
\end{lemma} 

\begin{proof}
We note that all derivatives of orders higher than $2$ of $f(w) = \frac{1}{n}||y-Xw||^2$ are zero, and that $\nabla f_{\wstar} = 0$ by the optimality of $\wstar$. We also note that the Hessian $\nabla^2 f_w = \frac{1}{n}X^{T}X$ at all points $w$. Then by the Taylor expansion of $f(w)$ around $\wstar$:
$$ f(\hatw) = f(\wstar) + (\hatw - \wstar)\cdot \nabla f_{\wstar} + \frac{1}{n}(\hatw - \wstar)'X^{T}X(\hatw - \wstar)
$$
Which using $\nabla f_{\wstar} = 0$ and rearranging terms gives the result. 
\end{proof}

Now Corollary 7 in the Appendix of \cite{revisit} states (without proof) the below identity, which we provide a proof of for completeness in  Lemma~\ref{lem:sherman} in the Appendix: 
\begin{equation}
\label{eq:decomp_e2}
    \hatw - \wstar = (-X^{T}X + \lambda I + E_1)^{-1}E_1 \wstar - \lambda (\covx + \lambda I + E_1)^{-1}\wstar + (\covx + \lambda I  + E_1)^{-1}E_2
\end{equation}

Hence, still following \cite{revisit}, for any psd matrix $A$, $||\hatw - \wstar||_A^2 \leq$
\begin{equation}\label{matrixnorm}
 3 ||(\covx + \lambda I + E_1)^{-1}E_1 \wstar||_A^2 + 3\lambda^2 \normreg{(\covx + \lambda I + E_1)^{-1}\wstar}_A^2 + 3||(\covx + \lambda I + E_1)^{-1}E_2||_A^2
\end{equation}

\begin{lemma}[\cite{revisit}]\label{matrixneq}
With probability $1-\rho$, $||E_1||_2 \leq (\lambda_{\min}(\covx) +  \lambda) / 2$, and hence
$$\covx + \lambda I + E_1 	\succ .5(\covx + \lambda I)$$ 
\end{lemma}

We also remark that $||By||_A^2 = (By)^{T}ABy = ||y||_{B^{T}AB}^2$ for any vector $y$, and matrices $A, B$.

Hence, Inequality~\ref{matrixnorm}, with $A = \covx$ can be further simplified to:
\begin{multline}\label{normexp}
  3 ||(\covx + \lambda I + E_1)^{-1}E_1 \wstar||_A^2 + 3\lambda^2 \normreg{(\covx + \lambda I + E_1)^{-1}\wstar}_A^2 + 3||(\covx + \lambda I + E_1)^{-1}E_2||_A^2 \leq \\
  O\left(\normreg{E_1 \wstar}_{(\covx + \lambda I + E_1)^{-1}}^2 + \lambda^2 \normreg{\wstar}_{(\covx + \lambda I + E_1)^{-1}}^2 + \normreg{E_2}^2_{(\covx + \lambda I + E_1)^{-1}}\right) \leq \\
  \text{(Lemma~\ref{matrixneq})} \; \; O\left(\normreg{E_1 \wstar}_{(\covx + \lambda I)^{-1}}^2 + \lambda^2 \normreg{\wstar}_{(\covx + \lambda I)^{-1}}^2 + \normreg{E_2}^2_{(\covx + \lambda I)^{-1}}\right)
\end{multline}
By basic properties of the trace we have: $\text{tr}(({\lambda I + \covx)^{-1}}) \leq d \lambda_{max}({\lambda I + \covx^{-1}}) = \frac{d}{\lambda_{\min}(\lambda I + \covx)} = \frac{d}{(\lambda_{\min} + \lambda)}$, and $\normreg{\wstar}_{(\covx + \lambda I)^{-1}}^2 \leq \frac{\norm{\wstar}^2}{\lambda}$. Continuing from \cite{revisit} by their Lemma 6, we can bound each $\normreg{E_1 \wstar}_{(\covx + \lambda I)^{-1}}^2$ and $\normreg{E_2}^2_{(\covx + \lambda I)^{-1}}$.

\begin{lemma}[\cite{revisit}]\label{lem:jl}
Let $w \in \mathbb{R}^d$ and let $E$ a symmetric Gaussian matrix where the upper triangular region is sampled from $N(0, \sigma^2)$ and let $A$ be any psd matrix. Then with probability $1-\rho$:

$$
||Ew||_A^2 \leq \sigma^2\text{tr}(A)\norm{w}^2 \log(2d^2/\rho)
$$
\end{lemma} 

 Then recalling that: 

\begin{itemize}
    \item $\sigma_1^2 = \tilde{O}(\normX^4/\epsilon^2)$
    \item $\sigma_2^2 = \tilde{O}(l\normX^2 \normY^2/\epsilon^2)$
\end{itemize}
 Plugging into Lemma~\ref{lem:jl}, and bringing it all together we get: 
 
\begin{multline} \label{eq:penult}
n \cdot |f(\hatw)-f(\wstar)| \leq \normreg{\wstar-\hatw}_{\covx} = \\
O\left(\normreg{E_1 \wstar}_{(\covx + \lambda I)^{-1}}^2 + \lambda^2 \normreg{\wstar}_{(\covx + \lambda I)^{-1}}^2 + \normreg{E_2}^2_{(\covx + \lambda I)^{-1}}\right) = \\
\tilde{O}\left( \frac{d}{\lambda_{\min} + \lambda} \norm{\wstar}^2 (\normX^4/\epsilon^2)\log(2d^2/\rho) + 
\lambda \norm{\wstar}^2 + \frac{d}{\lambda_{\min} + \lambda}l\normX^2 \normY^2/\epsilon^2\log(2d^2/\rho)\right)
\end{multline}

Upper bounding Equation~\ref{eq:penult} by taking $\lambda_{\min} = 0$, we minimize over $\lambda$ setting 

\begin{multline}\label{eq:final}
\lambda = \tilde{O}\left(
\frac{1}{\epsilon}\sqrt{d \log(2d^2/\rho)}\norm{X} \sqrt{\normX^2 + \frac{l  \normY^2}{\norm{\wstar}^2}} \right) \implies \\
|f(\hatw)-f(\wstar)| = \tilde{O}\left(
\frac{1}{n\epsilon}\sqrt{d \log(2d^2/\rho)}\norm{X} \sqrt{\normX^2 \norm{\wstar}^4 + l  \normY^2\norm{\wstar}^2} \right)
\end{multline}

Now if we are in the $\eta-$small regime, we have 
$\normY \leq \eta \normX \norm{\wstar}$, and so 
$$ l\normY^2 \wstar^2 \leq l \eta^2 {\normX}^2 {\norm{\wstar}}^4,$$ 
which reduces Equation~\ref{eq:final} to: 
$$
|f(\hatw) - f(\wstar)| = \tilde{O}\left(
\frac{1}{n\epsilon}\sqrt{d \log(2d^2/\rho)}\norm{X}^2\norm{\wstar}^2(\eta\sqrt{l}) \right),
$$ as desired. 

\end{proof}

Inspecting Theorem~\ref{thm:ssp-acc-full}, we see that when $l < \min(\frac{n}{\sqrt{d}}, \frac{\normX^2 \normW^2}{\normY^2})$, Algorithm~\ref{alg:reusecov} with $\mathcal{M}$ the Gaussian mechanism achieves error $\tilde{O}\left(\frac{\sqrt{d}\normX^2 \norm{\hat{w}}^2}{n}\right)$. Since this matches the lower bound in Equation~\ref{eq:lb} for a single private regression in this case we say that we have achieved ``PRIMO for Free!''

\section{Improved Algorithms for Large $l$}
\label{sec:query_release}
The improvement of $\algotext{ReuseCovGauss}$ over the naive PRIMO baseline in the previous section only applies in the parameter regime where the asymptotic error in Equation~\ref{eq:ssp_error_terms} is dominated by the covariance error term. In the regime where the association error term dominates, \algotext{ReuseCovGauss} still incurs a $\sqrt{l}$ multiplicative factor in the error term, which does not improve over the baseline. In this section we show that if we are willing to relax from full DP to either feature DP or label DP, we can improve the dependence on $l$. Specifically, via a reduction from privately computing the association term to computing the image of a private vector under a public matrix,  under feature DP we can obtain improved bounds for PRIMO when $l > \frac{n}{\sqrt{d}}$ (Subsection~\ref{subsec:feat}), and under \emph{label differential privacy} we obtain improved bounds when $l > n^{2/3}$ (Subsection~\ref{subsec:label}). In these cases, we obtain the surprising result that the MSE $\alpha$ has no explicit asymptotic dependence on $l$.
\subsection{PRIMO Under Feature Differential Privacy}
\label{subsec:feat}

\begin{algorithm}[!ht]{$\algotext{GaussProj}_{Y}$}
\caption{Input:  $X \in \mathcal{X}^n \subset (\mathbb{R}^d)^n$, $Y = [y_1, \ldots y_{l}] \in \mathcal{Y}^{l \times n},$ privacy params: $\epsilon, \delta$.}
\label{alg:proj}
\begin{algorithmic}[1]
\State Let $r = c(\epsilon, \delta)\frac{1}{n}\sup_i||y^i||_2 \normX$ 
\State Sample $w \sim N(0, 1)^{dl}$
\State Let $\tilde{g} = g + rw$, where $g = \frac{1}{n}X^{T}Y$ \Comment{Up to this point this is just $\algotext{GaussMech}$}
\State Formulate $C = C(Y) \in \frac{1}{n}\mathcal{Y}^{dl \times dn}, \vecx$. \Comment{$C$ is only used in the projection step}
\State Let $\hat{g} = \text{argmin}_{g \in K}\norm{g-\tilde{g}}^2$, where $K = C(\sqrt{n}\normX B_1)$.
\end{algorithmic}
\end{algorithm}

Let us reconsider the problem of privately computing the association term $\frac{1}{n}X^{T}Y$ where $Y \in \mathcal{Y}^{n \times l}$, $X \in \mathcal{X}^{n} \subset \mathbb{R}^{n \times d}$, and $Y$ is considered public, so $\mathcal{M}$ is only constrained to be feature DP. Let $\vecx = (x_{11}, \ldots, x_{1n}, x_{21}, \ldots x_{d1}, \ldots x_{dn}) \in \mathbb{R}^{nd}$. Then since $\frac{1}{n}X^{T}Y$ is \emph{linear} in every entry of $X$, there exists a matrix $C(Y) \in \frac{1}{n}\mathcal{Y}^{dl \times dn}$ such that 
$\frac{1}{n}X^{T}Y = C\cdot \vecx$. We explicitly derive an expression for $C$ in Subsection~\ref{subsec:app_query} of the Appendix. Note that $\norm{\vecx}^2 \leq n\normX^2$, hence $\frac{1}{n}X^{T}Y \in C(\sqrt{n}\normX B_1)$, where $B_1 = \{x \in \mathbb{R}^{nd}: \norm{x} \leq 1\}$. 

Algorithm~\ref{alg:proj} is our projection-based subroutine for privately computing $\frac{1}{n}X^{T}Y$, which we call $\algotext{GaussProj}_{Y}$. We first state the squared error of $\algotext{GaussProj}_Y$ in Theorem~\ref{thm:proj-acc}, and then translate this error into the PRIMO error of $\algotext{ReuseCovProj}_Y$ (Algorithm~\ref{alg:reusecov} with $\mathcal{M}$ = Algorithm~\ref{alg:proj}) in Theorem~\ref{thm:ssp-query-acc}. The crux of the proof is Lemma~\ref{lem:proj_lemma} from \cite{proj}, which uses a geometric argument to bound the error after the projection.

\begin{restatable}{theorem}{projacc}
\label{thm:proj-acc}
Let $\hat{g}, g$ as in Line $4$ of Algorithm~\ref{alg:proj}. Then with probability $1-\rho:$
$$
\norm{g-\hat{g}}^2 = O\left(\frac{l\sqrt{d}c(\epsilon, \delta)\sqrt{\log(2/\rho)}\normY^2 \normX^2}{n}\right)
$$
\end{restatable}
\begin{proof}
$\mathcal{M}$ is $(\epsilon, \delta)$ differentially private by the Gaussian Mechanism and post-processing \cite{privacybook}. So we focus on the high probability accuracy bound. By Lemma 4 from \cite{proj} we have, letting $K, r, w$ as defined in Algorithm~\ref{alg:proj}, that: 
\begin{equation}
\norm{\hat{g}-g}^2 \leq 4 ||rw||_{K^{\circ}} = r\sup_{x \in K}x \cdot w
\end{equation}
Using the fact that the $l_2$ norm is self-dual, we have: 
\begin{equation}
    \norm{\hat{g}-g}^2 \leq 4r ||w||_{K^{\circ}} \leq 4r\normX\sqrt{n}\sup_{z \in B_{1}}(Cz)\cdot w = 4r\normX\sqrt{n}\norm{C^{T}w}
\end{equation}

So in order to bound $\norm{\hat{g}-g}^2$ with high probability it suffices to bound $\norm{C^{T}w}$ with high probability. This is the content of the Hanson-Wright Inequality for anisotropic random variables \cite{vershynin}. 
\begin{lemma}[\cite{vershynin}]\label{hanson}
Let $C^{T}$ an $m \times n$ matrix, and $X \sim N(0,1)^{n} \in \mathbb{R}^n$. Then for a fixed constant $c > 0$:
$$
\mathbb{P}[\;|\; \normreg{C^{T}X}-\normreg{C}_F\;| > t] \leq 2\text{exp}({\frac{-ct^2}{\normreg{C}_F}})
$$
\end{lemma}

Lemma~\ref{hanson} shows that $\norm{C^{T}w} = O(\normreg{C}_F\sqrt{\log(2/\rho)})$ with probability $1-\rho$. Given $x_{kj} \in X$, let $m = (k-1)n + j$ be the corresponding column of $C$, and let $c^{m}$ denote this column. Then $||c^{m}||_2 = \frac{1}{n}||(y_{1j}, \ldots y_{lj})||_2 = \frac{1}{n}||y^{j}||_2 \leq \frac{\sqrt{l}}{n}$. Hence $\normreg{C}_F = \frac{\sqrt{d \sum_{i=1}^{n}\norm{y^i}^2}}{n}$.

 Plugging in the value of $r$ gives, with probability $1-\rho$:
\begin{align}
    \frac{1}{dl}\mathbb{E}[\norm{\hat{g}-g}^2] &= O\left(\frac{1}{dl} \cdot \sqrt{n}\normX\frac{\sqrt{d \sum_{i=1}^{n}\norm{y^i}^2}}{n}\sqrt{\log(2/\rho)} \cdot \frac{c(\epsilon, \delta)\normX \sup_{i}\norm{y^i}}{n}\right) \\
    &=   O\left(\frac{c(\epsilon, \delta)\sqrt{\log(2/\rho)} \sqrt{\frac{1}{n}\sum_{i = 1}^{n}\norm{y^i}^2}\sup_{i}\norm{y^i} \normX^2}{nl\sqrt{d}}\right), 
\end{align}
Since both terms involving $Y$ in the numerator are $\leq \sqrt{l}\normY$ the bound follows. We note that since $Y$ is public, in practice we can compute these terms rather than using $\normY$, the worst case bound.
\end{proof}
Notice that the mean squared error of the Gaussian mechanism without the projection is $O({r^2}) = O(\frac{l\normX^2\normY^2}{n^2})$, which for $l\sqrt{d} \gg n $ is strictly larger than the error of the projection mechanism. We also note that the bound in Theorem~\ref{thm:proj-acc} is strictly better than the error given by applying the Median Mechanism algorithm of \cite{median_mech} for low-sensitivity queries, which is tailored for $l_\infty$ error, and which also requires discrete $\mathcal{X}$. For example, when $\mathcal{Y} = \{0,1\}, \mathcal{X} = \{0,1\}^d$, then $\normY = 1, \normX = \sqrt{d}$, and so Theorem~\ref{thm:proj-acc} gives a bound of $\tilde{O}(\frac{\sqrt{d}}{n})$, whereas the Median Mechanism gives $\tilde{O}(\frac{d^{2/3}\log(dl)^2}{n^{2/3}})$ for the mean squared error. We now state the accuracy guarantees of $\algotext{ReuseCovProj}_Y$. The proof follows the proof of Theorem~\ref{thm:ssp-acc-full}, substituting the bound from Theorem~\ref{thm:proj-acc} into the $\norm{E_2}^2$ term in Equation~\ref{eq:decomp_e2}. We defer the full proof to the Appendix.  

\begin{restatable}{theorem}{sspqueryacc}
\label{thm:ssp-query-acc}
Let $\mathcal{A}$ denote the variant of Algorithm~\ref{alg:reusecov} with $\M = \algotext{GaussProj}_{Y}(X, Y, \epsilon/2, \delta/2)$. Then $\mathcal{A}$ is an $(\alpha, \rho, \epsilon, \delta)$ solution to the feature DP PRIMO problem with 
$$
\alpha =  \tilde{O}\left(\norm{\hat{w}} \sqrt{\frac{d \norm{\hat{w}}^2 (\normX^4)}{n^2} + \frac{\sqrt{d}\normY^2 \normX^2}{n}}\right),
$$
where $\tilde{O}$ omits terms polynomial in $\frac{1}{\epsilon}, \log(1/\delta), \log(1/\rho)$.
\end{restatable}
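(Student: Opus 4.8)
The plan is to run the entire argument behind Theorem~\ref{thm:ssp-acc-full} unchanged, except at the single point where the association noise enters. The loss decomposition of Equation~\ref{eq:loss_decomp}, obtained from Lemma~\ref{lem:sherman} by expanding $\norm{\tilde{w}_{i*}-\wstar}_{\covx}^2$, holds for \emph{any} mechanism producing the noisy association terms $\hat{v}_i$: for each regression $i$ it splits the excess empirical risk into a covariance error $\norm{E_1\wstar}_{(\covx+\lambda I)^{-1}}^2$, a regularization error $\lambda^2\norm{\wstar}_{(\covx+\lambda I)^{-1}}^2$, and an association error $\norm{E_{2i}}_{(\covx+\lambda I)^{-1}}^2$, where now $E_{2i}=\hat{v}_i-\frac1n X^{T}y_i$ is the error of Algorithm~\ref{alg:proj} on outcome $i$ rather than Gaussian noise. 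Since the PRIMO loss is the average of these excess risks over $i\in[l]$, it suffices to bound the average of each of the three terms.

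The covariance and regularization terms depend only on $E_1$ and $\lambda$, not on $\M$, so they are controlled exactly as in Theorem~\ref{thm:ssp-acc-full}: the Johnson--Lindenstrauss bound of Lemma~\ref{lem:jl} on $\norm{E_1\wstar}$, averaged using $\norm{\hat w}^2=\frac1l\norm{W^*}_F^2$, contributes the $\frac{d\normX^4\norm{\hat w}^2}{n^2}$ term. The only change is the association term. Here I would first relax the weighted norm to the Euclidean one via the operator-norm bound $\norm{E_{2i}}_{(\covx+\lambda I)^{-1}}^2\le\frac{1}{\lambda+\lambda_{\min}}\norm{E_{2i}}_2^2$, then average over $i$ and observe that $\frac1l\sum_{i=1}^l\norm{E_{2i}}_2^2=\frac1l\norm{\hat g-g}_2^2$ is exactly the aggregate quantity bounded by Theorem~\ref{thm:proj-acc}. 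Substituting that bound gives an average association error of order $\frac{\sqrt d\,\normY^2\normX^2}{n}$, i.e.\ the association term $\frac{ld\normY^2\normX^2}{n^2}$ of Theorem~\ref{thm:ssp-acc-full} with the factor $\frac{ld}{n^2}$ replaced by $\frac{\sqrt d}{n}$. Re-running the optimization over $\lambda$ with this smaller term in place then yields the stated $\alpha$.

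For the privacy guarantee I would compose the noisy covariance release in Line~2 of Algorithm~\ref{alg:reusecov}, run at $(\epsilon/2,\delta/2)$ against a single-row change in $X$, with Algorithm~\ref{alg:proj} run at $(\epsilon/2,\delta/2)$ (which is $(\epsilon/2,\delta/2)$-DP by Theorem~\ref{thm:proj-acc}); the final map $\hat w_i=\hat I\hat v_i$ is post-processing. This is $(\epsilon,\delta)$-DP in the public-label sense. Because the accuracy statement is high-probability, I would close with a union bound over the failure event of the Johnson--Lindenstrauss estimate of the covariance error and the probability-$\rho$ failure event of Theorem~\ref{thm:proj-acc}.

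I expect the only real content beyond transcription to be a bookkeeping subtlety: Theorem~\ref{thm:proj-acc} controls only the \emph{aggregate} error $\frac1{dl}\norm{\hat g-g}_2^2$ and says nothing about any individual $\norm{E_{2i}}_2$, but because the PRIMO objective is itself an average over the $l$ regressions the aggregate bound is precisely what the decomposition needs -- no per-regression control is required. The remaining care is purely in threading the $\frac1n$ normalizations through the weighting matrix $(\covx+\lambda I)^{-1}$ and redoing the $\lambda$-optimization with the new association term; since the $d^{1/4}$ improvement is already internal to the $\ell_2^2$ bound of Theorem~\ref{thm:proj-acc}, the operator-norm relaxation does not erode it.
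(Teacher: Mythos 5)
Your proposal matches the paper's proof essentially step for step: the paper likewise reuses the expansion up to Equation~\ref{eq:penult} with $\norm{E_{2i}}_2^2$ left generic, relaxes the $(\covx+\lambda I)^{-1}$-weighted norm by a $\frac{1}{\lambda_{\min}+\lambda}$ factor, averages over $i\in[l]$ so that the aggregate bound of Theorem~\ref{thm:proj-acc} applies directly, takes a union bound over the two failure events, and re-optimizes $\lambda$. Your observation that only the aggregate (not per-regression) error control is needed is exactly the point the paper's aggregation step relies on, so the proposal is correct and takes the same route.
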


Inspecting Theorem~\ref{thm:ssp-query-acc} in the $l > \frac{n}{\sqrt{d}}$ regime where the projection improves the error over the Gaussian Mechanism, when $n < \frac{\sqrt{d}\normX^2 \normW^2}{\normY^2}$ is not too large, the dominant term in the error is $\tilde{O}\left(\frac{\sqrt{d}\normX^2 \norm{\hat{w}}^2}{n} \right), $ which matches the lower bound~\ref{eq:lb}, and so we achieve PRIMO for Free! When $n$ is sufficiently large the dominant error term is $\tilde{O}\left(\frac{\norm{\hat{w}}{d}^{1/4}\normY \normX}{\sqrt{n}} \right)$ which is a factor of $\frac{n}{d^{1/4}}$ worse than the lower bound.

\subsection{PRIMO Under Label Differential Privacy}
\label{subsec:label}
The setting where $X$ is public and $Y$ is considered private is strictly easier to satisfy than setting in which they are both considered private, and in particular in order for Algorithm~\ref{alg:reusecov} to satisfy label DP, we only need to add noise to terms that involve $Y$. Thus we can compute $\hat{I}$ in Line~2 of Algorithm~\ref{alg:reusecov} exactly without adding any noise $E_1$. We still need to compute the association term $\frac{1}{n}X^{T}Y$ privately in $Y$, where now the privacy guarantee holds over rows of $Y$ rather than rows of $X$. Computing $\frac{1}{n}X^{T}Y$ privately is equivalent to computing  $\frac{1}{n}Y_{l \times n}^{T}X_{n \times d}$, where we've added subscripts indicating the dimensions of the matrices. Then by direct analogy to the previous section, we can compute $\frac{1}{n}X^{T}Y$ via Algorithm $2$, where Theorem~\ref{thm:proj-acc} holds by switching $X$ and $Y$, and $d, l$. We call this variant of Algorithm~\ref{alg:proj} $\algotext{GaussProj}_X$. Thus invoking Theorem~\ref{thm:proj-acc} the error in computing the association term is with probability $1- \rho$:
\begin{equation}
\label{eq:proj_error_label}
    ||g-\hat{g}||_2^2 = O\left(\frac{c(\epsilon, \delta)\sqrt{\log(2/\rho)}|\mathcal{X}|^2\norm{\mathcal{Y}^l}^2 d \sqrt{l}}{n}\right)
\end{equation}
The error in Equation~\ref{eq:proj_error_label} improves over the error given by $\mathcal{M} = \algotext{GaussMech}$ when $O(\frac{d \sqrt{l}}{n}) < O(\frac{dl^2}{n^2}) \implies l > n^{\frac{2}{3}}$, which we summarise in row $5$ of Figure~\ref{tbl:summary}. Interestingly, in contrast to the public label setting, this condition has no dependence on the dimension $d$. Substituting Equation~\ref{eq:proj_error_label} into Equation~\ref{eq:decomp_e2} with $E_1 = 0$ since we are in the label-private setting and don't have to add noise to the covariance matrix, we can optimize over $\lambda$ to get:

\begin{theorem}
\label{thm:ssp-query-acc-label}
Let $\mathcal{A}$ denote the label-private variant of Algorithm~\ref{alg:reusecov} with $E_1 = 0, \M = \algotext{GaussProj}_{X}(X, Y, \epsilon/2, \delta/2)$. Then $\mathcal{A}$ is an $(\alpha, \rho, \epsilon, \delta)$ solution to the label DP PRIMO problem with 
$$ \alpha = \tilde{O}\left(\norm{\hat{w}}\frac{|\mathcal{X}|\norm{\mathcal{Y}^l}\sqrt{d}}{nl^{1/4}}\right) $$
\end{theorem}
We remark that while the form of Theorem~\ref{thm:ssp-query-acc-label} appears to suggest that the MSE decreases with increasing $l$, which is counterintuitive, $l$ actually appears in the numerator through $||\mathcal{Y}^l||_2 = O(|\mathcal{Y}|\sqrt{l})$, giving MSE with (mild) dependence $O(l^{1/4})$ on $l$.

\subsection{Computational Efficiency}
\label{subsec:complexity_sketch}

In this section we discuss the computational complexity of Algorithm~\ref{alg:reusecov} which can be broken down into $3$ components: 
\begin{itemize}
\item Step $1$: Forming $\covx$
\item Step $2$: In the case where $\mathcal{M}$ is the projection algorithm, Line $5$ in Algorithm~\ref{alg:proj}: 

$$\hat{g} = [\hat{g}_1, \ldots \hat{g}_l] = \text{argmin}_{g \in C(\sqrt{n}\normX B_1)}\normreg{g-\tilde{g}}_2^2$$

\item Step $3$: the cost of computing $\hat{I}^{-1}\hat{g}_i = (\covx + \lambda I + E_1)^{-1}\hat{g}_i \; \forall i \in [l]$
\end{itemize}

Step $1$ forming the covariance matrix $X^{T}X$ is a matrix multiplication of two $d \times n$ matrices, which can be done via the naive matrix multiplication in time $O(nd^2)$, and via a long-line of ``fast'' matrix multiplication algorithms in time $O(d^{2 + \alpha(n)})$; For example if $n < d^{.3}$ it can be done in time that is essentially $O(d^2)$ \cite{fast_mat}. Step $2$ corresponds to minimizing a quadratic over a sphere. Setting $A = C^TC \in \mathbb{R}^{dn \times dn}, b = 2C^{T}\tilde{g} \in \mathbb{R}^{dl}$, then $\hat{g} = Cx$, where $x \in \mathbb{R}^{nd}$ is the minimizer of:
\begin{align}
    & \min_{x \in B_1}x^{t}Ax - b^{t}x  \\
    & \text{ s.t. } \normreg{x}_2 \leq \sqrt{n}\normX
\end{align}
Now, given the spectral decomposition of $A = U\Lambda U^{T}$, and the coordinates of $b$ in the eigenbasis $U^{T}b$, Lemma $2.2$ in \cite{hager} gives a simple closed form for $x$ that computes each coordinate in constant time. Since there are $nd$ coordinates of $x$, this incurs an additional additive factor of $O(nd)$ in the complexity, which is dominated by the cost of diagonalizing $A$. So the complexity of this step is the complexity of diagonalizing $A = C^{T}C$, or equivalently finding the right singular vectors of $C$, plus the complexity of computing $U^{T}b$.
This is seemingly bad news, as $C \in \mathbb{R}^{dn \times dl} $ is a very high-dimensional matrix, and the complexity for computing the SVD of $C$ without any assumptions about its structure is $O(d^3ln \min(l, n))$ \cite{GoluVanl96}. However, it is evident from the construction of $C$ in Subsection~\ref{sec:query_release} that $C = I_d \otimes \frac{1}{n}Y^{T},$ where $\otimes$ is the Kronecker product. In Appendix~\ref{sec:complex} we show that using properties of the Kronecker product, we can compute the SVD$(C)$ in the same time as computing SVD$(Y)$, or $O(nl \min(n,l))$. $U^Tb$ can be computed in time $O(nld)$ using similar tricks based on the Kronecker decomposition of $C$. 

Finally, Step $3$ can be completed by solving the equation $\hat{I}\hat{w}_i = \hat{g}_i, i = 1 \ldots l$ via the conjugate gradient method, which takes time $O(\gamma(\hat{I})d^2\log(1/\epsilon))$ to compute an $\epsilon$-approximate solution \cite{mahoney} where $\gamma(\hat{I})$ is the condition number. We note that this has to be done separately for each $i = 1 \ldots l$ giving total time $O(l \cdot \gamma(\hat{I})d^2\log(1/\epsilon))$. Alternatively, an exact solution $\hat{I}^{-1}\hat{g}_i$ can be computed directly using the QR decomposition of the matrix $\hat{I}$. The decomposition $\hat{I} = QR$ can be computed in time $O(d^3)$ \cite{mahoney} and does not depend on the $\hat{g}_i$, after which using $R\hat{w}_i = Q^{T}\hat{g}_i$, $\hat{w}_i$ can be computed in time $O(d^2)$ via backward substitution. This gives a total time complexity of $O(d^3 + ld^2)$. So if $d$ and $\frac{1}{\gamma(\hat{I})}$ are sufficiently small relative to $l$, e.g. if $l = \tilde{\Omega}(d/(\gamma(\hat{I})))$, it will be faster to use the $QR$ decomposition based method. 

Putting this all together we have:
\begin{theorem}
\label{thm:complex}
The complexity of Algorithm~\ref{alg:reusecov_emp} is $O(\max(\min(nl^2, n^2l), nld, nd^2, ld^2, d^3))$.
\end{theorem}

We summarize these algorithmic changes to \algotext{ReuseCovProj} in Algorithm~\ref{alg:reusecov_emp}. Theorem~\ref{thm:complex} shows that when $n > d > l$, the complexity of Algorithm~\ref{alg:reusecov_emp} is $O(nd^2)$ or the cost of forming the covariance matrix. This motivates the results in Subsection~\ref{sec:sub_ssp} of the Appendix where we show how sub-sampling $s < n$ points and using matrix Chernoff bounds can improve this to $O(sd^2)$.



\section{Experiments}\label{sec:experiments}
\textbf{Datasets.} We evaluate the accuracy of our algorithms $\algotext{ReuseCovGauss}$ and $\algotext{ReuseCovProj}_X$ on the task of genomic risk prediction using real $X$ data from two of the largest publicly available databases, and simulated outcomes $Y$ so that we can easily vary the number of outcomes $l$. In addition, in Appendix~\ref{sec:app_exp} we include experiments on two additional datasets over smaller values of $l$, one constructed by sub-sampling MNIST \cite{mnist} and generating synthetic outcomes from a noisy linear model, and the other from entirely synthetic Gaussian data with outcomes generated either from a $2$-layer MLP or a noisy linear model. 

The genomic datasets are from two sources: the 1000 Genomes project (1KG) \cite{1KG}, and the Database of Genomes and Phenotypes \cite{dbgap} (accession phs000688.v1.p1). The datasets contain $n = 5008, 6042$ haplotypes at $d = 78961, 16721$ SNPs, respectively. Whenever experiments are run with a fixed value of $d < 78961, 16721$, we have randomly sub-sampled $d$ SNPs without replacement. In order to vary the number of phenotypes $l$ we generate synthetic phenotype data using our haplotype dataset. After centering our haplotype matrix $X$ by subtracting off the row means we generate synthetic phenotypes $y_i$ for $i = 1\ldots l$ by generating a random $\theta_{i} \sim \mathcal{N}(0, \frac{I_d}{\sqrt{d}}), y_{ij} \sim \theta_{i} \cdot x_j + \mathcal{N}(0, 1).$ In order to evaluate our algorithms at larger values of $n$ than exist in our genomic datasets, we train a GAN on the dbGaP dataset following \cite{lukman}, and use it to generate synthetic samples to simulate having up to a million data points. \\

\textbf{Experimental Details.} For a given dataset and setting of $(n, d, l)$ we run $\algotext{ReuseCovProj}_X$ and $\algotext{ReuseCovGauss}$ $10$ different times, and calculate the resulting average MSE. We study two different regimes, one where $n$ is of moderate size corresponding to real genomic datasets, $d$ is small, and $l$ is varied from $1$ to $1e5$ (Figure~\ref{fig:graphs}). The other regime models the scenario where in the near future genomic databases will have millions of individuals (e.g. UK Biobank already has $> 400000$ samples \cite{biobank}), generating $n = 1e6$ samples from the GAN trained on dbGaP and again varying $l$ from $1$ to $1e5$ (Figure~\ref{fig:large_synth}). Note that we do not have experiments with larger values of $d$ for computational reasons; as discussed in Section~\ref{sec:complex} the runtime of both algorithms is quadratic in $d$.

Rather than plotting the MSE $\alpha$ directly, in Figure~\ref{fig:graphs} we plot the coefficient of determination $R^2$, which is equivalent in that $R^2 = 1-\frac{\alpha}{\frac{1}{nl}SS_{tot}}$. We use $R^2$ because it is easier to interpret given that achieving error better than the constant predictor implies $R^2 \in (0,1]$. In Figures~\ref{fig:logerr_dbgap},\ref{fig:large_synth} we also visualize the results using a different metric, the log of the ratio of the MSE of the private predictor to the MSE of the optimal non-private predictor, where a larger ratio indicates a greater multiplicative error. 

\textbf{Realistic $n$, varying $l$}. 
We find that for values of $l$ starting as small as $11$ for 1KG and $101$ for dbGaP, $\algotext{ReuseCovProj}_X$ achieves higher $R^2$ than $\algotext{ReuseCovGauss}$, and is able to achieve non-trival $R^2$ for large $l$, which is consistent with our Theorem~\ref{thm:ssp-query-acc}. For $l < 11$ on 1KG and $l < 101$ on dbGaP, $\algotext{ReuseCovGauss}$ achieves higher $R^2$. In addition to being consistent with the theory, this is intuitive; when $l$ is small the bias induced by the projection outweighs the reduction in the overall noise. Figures~\ref{fig:r2_1kg}, \ref{fig:r2_all_dbgap} show that for $l > 101, 801$ on 1KG and dbGaP respectively $\algotext{ReuseCovGauss}$ has negative $R^2$ -- by contrast, zooming in on the $R^2$ curve for $\algotext{ReuseCovProj}$ in Figures~\ref{fig:r2_proj_1kg}, \ref{fig:r2_proj_dbgap} we are able to achieve non-trivial $R^2$ at all values of $l$, and see minimal increase in error as $l$ increases. 

\begin{figure}[!h]
\centering
\begin{subfigure}[c]{.45\textwidth}
\includegraphics[width=\textwidth]{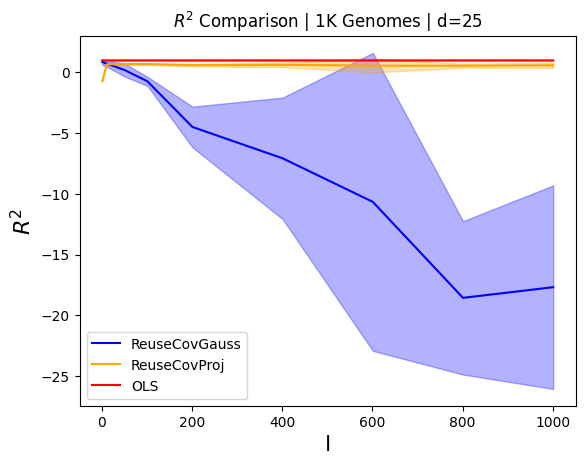}
\caption{}
\label{fig:r2_1kg}
\end{subfigure}
\hspace{.02\textwidth}
\begin{subfigure}[c]{.45\textwidth}
\includegraphics[width=\textwidth]{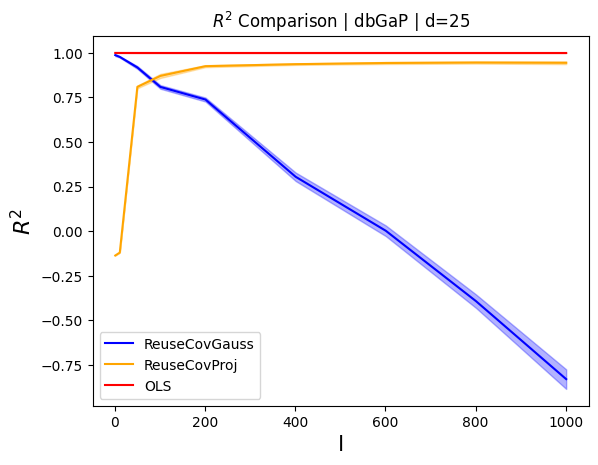}
\caption{}
\label{fig:r2_all_dbgap}
\end{subfigure}
\vspace{.02\textwidth}
\begin{subfigure}[c]{.45\textwidth}
\includegraphics[width=\textwidth]{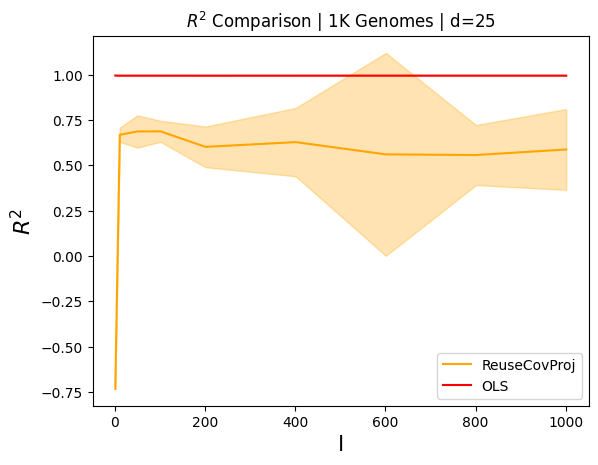}
\caption{}
\label{fig:r2_proj_1kg}
\end{subfigure}
\hspace{.02\textwidth} 
\begin{subfigure}[c]{.45\textwidth}
\includegraphics[width=\textwidth]{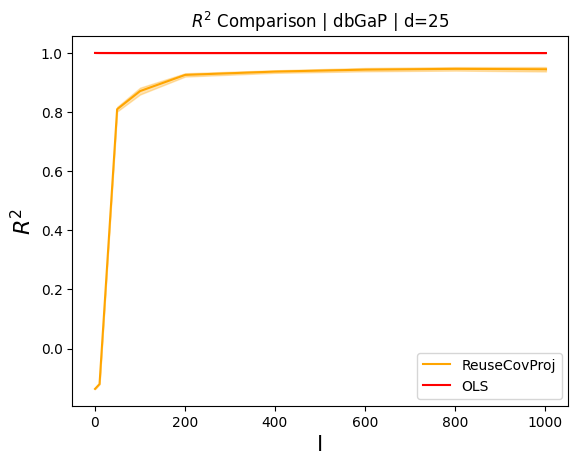}
\caption{}
\label{fig:r2_proj_dbgap}
\end{subfigure}
\begin{subfigure}[c]{.45\textwidth}
\includegraphics[width=\textwidth]{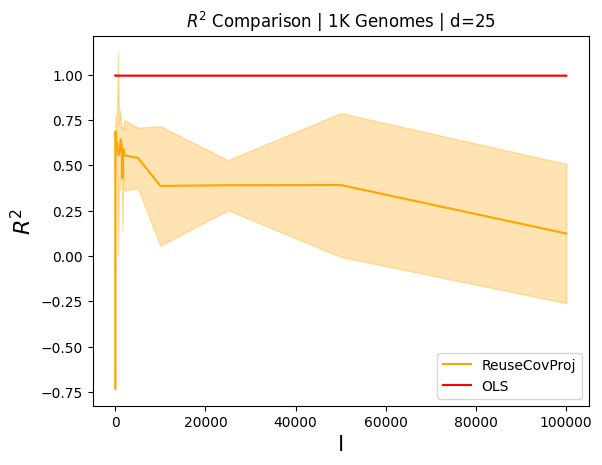}
\caption{}
\label{fig:app_r2_1kg}
\end{subfigure}
\hspace{.02\textwidth}
\begin{subfigure}[c]{.45\textwidth}
\includegraphics[width=\textwidth]{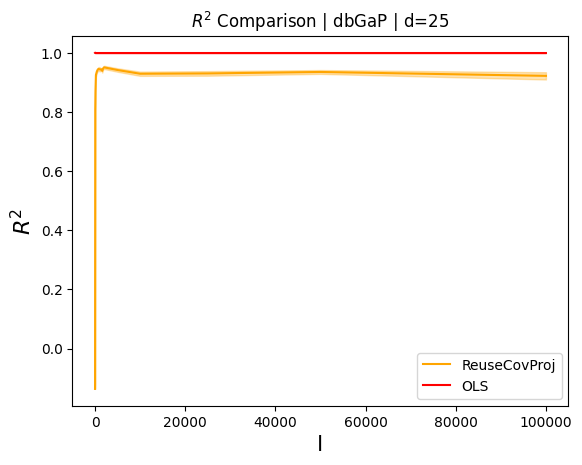}
\caption{}
\label{fig:app_r2_all_dbgap}
\end{subfigure}
\caption{Each $R^2$ value is averaged over $10$ iterations. The shaded area around the lines indicates the error bars for the $R^2$ value at a given value of $(l, d)$. In Figures (a)-(d) we plot the average $R^2$ for $d = 25, l = (1, 11, 101, 201, 401,601, 801, 1001),$ fixing $(\epsilon, \delta) = (5,\frac{1}{n^2})$ with $n = 5008, 6042$. In Figures (e)-(f) we show $l$ up to $1e5$.}
\label{fig:graphs}
\end{figure}

\textbf{Large $n$, varying $l$.}
Our upper bounds suggest that for large values of $n$, $\algotext{ReuseCovGauss}$ will outperform $\algotext{ReuseCovProj}_X$, and that the MSE should increase polynomially with increasing $l$, or equivalently linearly in $\log l$ when we take the log ratio. Both of these trends can be seen in Figure~\ref{fig:large_synth}(a). Given that the relative performance of $\algotext{ReuseCovProj}_X$ is improving as $l$ increases, we expect that for larger values of $l$ that were computationally prohibitive to run at such large $n$, $\algotext{ReuseCovProj}_X$ would again be the superior algorithm.

\begin{figure}[!h]
\centering
\begin{subfigure}[c]{.45\textwidth}
\includegraphics[width=\textwidth]{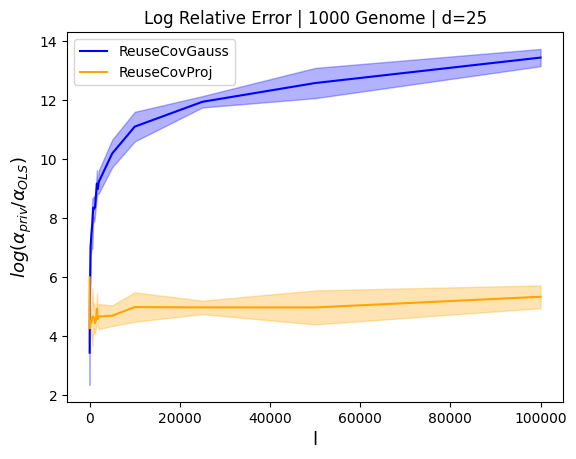}
\caption{}
\label{fig:app_logerr_1kg}
\end{subfigure}
\hspace{.02\textwidth}
\begin{subfigure}[c]{.45\textwidth}
\includegraphics[width=\textwidth]{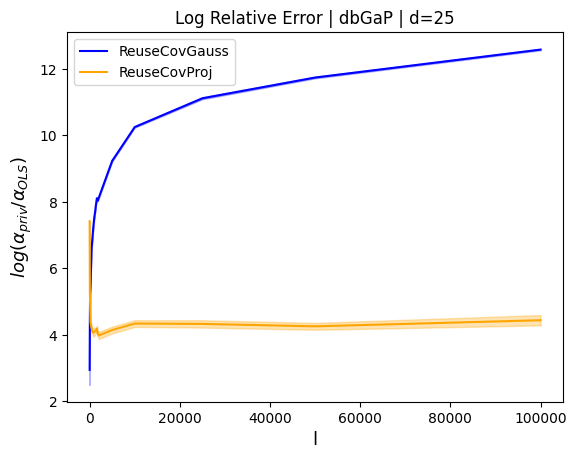}
\caption{}
\label{fig:app_logerr_dbgap}
\end{subfigure}
\vspace{.02\textwidth}
\begin{subfigure}[c]{.45\textwidth}
\includegraphics[width=\textwidth]{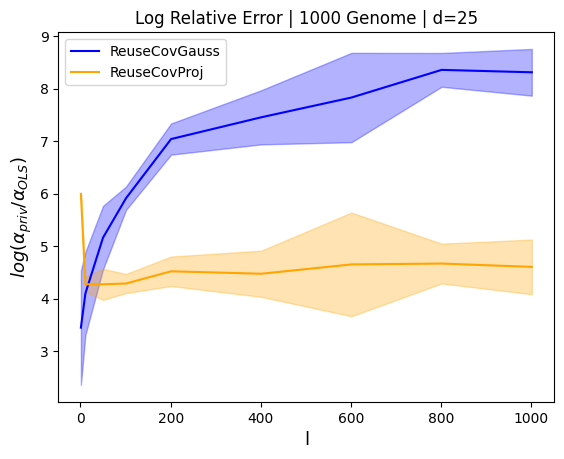}
\caption{}
\label{fig:logerr_1kg}
\end{subfigure}
\hspace{.02\textwidth}
\begin{subfigure}[c]{.45\textwidth}
\includegraphics[width=\textwidth]{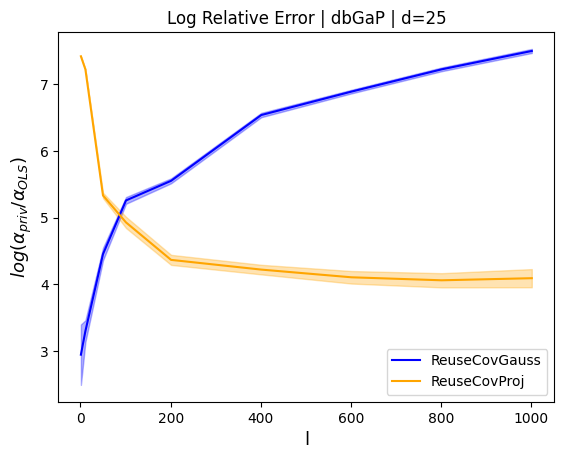}
\caption{}
\label{fig:logerr_dbgap}
\end{subfigure}
\caption{Comparing the log of the ratio of the squared loss of the private estimator to the square loss of the OLS estimator. Each value is averaged over 10 iterations. The shaded area around the lines indicates the error bars at the given value of $l$. (a) and (b) range $l$ up to $100,000$, (c) and (d) show $l = (1, 11, 101, 201, 401,601, 801, 1001)$ while we fixed $d=25$ and $(\epsilon, \delta) = (5,\frac{1}{n^2})$ with $n = 5008, 6042$ for 1000 Genomes and dbGaP respectively}
\label{fig:app_logerr}
\end{figure}

\begin{figure}[!h]
\centering
\begin{subfigure}[c]{.45\textwidth}
\includegraphics[width=\textwidth]{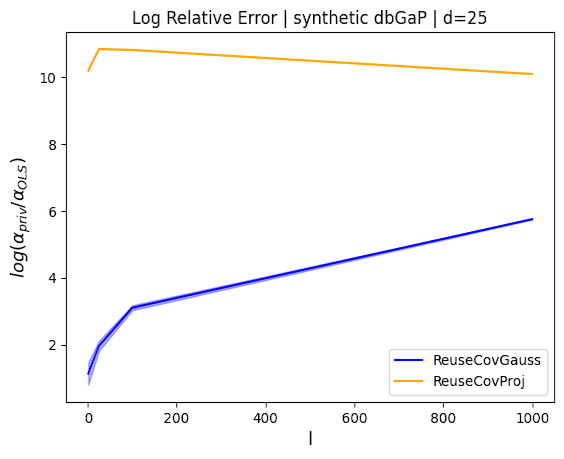}
\caption{}
\end{subfigure}
\hspace{.02\textwidth}
\begin{subfigure}[c]{.45\textwidth}
\includegraphics[width=\textwidth]{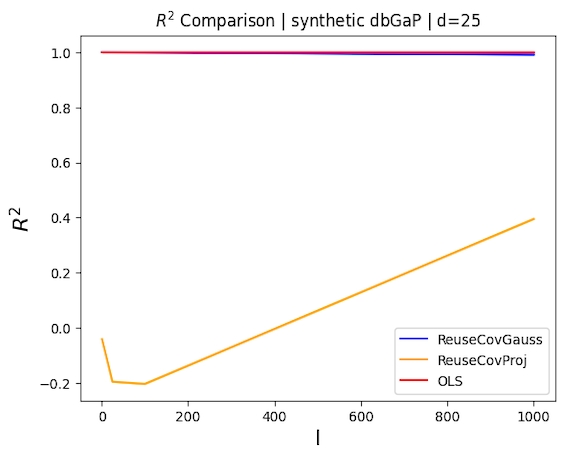}
\caption{}
\end{subfigure}
\caption{(a) shows the log of the ratio of the squared loss of the private estimator to the square loss of the OLS estimator and (b) shows the $R^2$ values. For both of these, the value was averaged over $10$ iterations. The shaded area around the lines indicates the error bars at a given value of $l$. We plot this for $l = (1,10,25,100,500,1000,5000)$ with $d=25$, $(\epsilon, \delta) = (5,\frac{1}{n^2})$ and $n = 5\cdot 1\mathrm{e}{5}$ on a synthetic dbGaP dataset.}
\label{fig:large_synth}
\end{figure}

\textbf{Additional Ablations}.
In the setting where $l < \frac{n}{\sqrt{d}}$ (Row $2$ of Table~\ref{tbl:summary}), theory suggests our algorithms should not outperform naive composition of private regressions. Since private regression methods like \algotext{DP-SGD} \cite{dp-sgd} are known to equal or outperform \algotext{SSP}, our \algotext{ReuseCov} algorithms which are variants of \algotext{SSP} might actually perform worse than composing \algotext{DP-SGD} across $l$ regressions.  We construct such a dataset and show this is in fact the case in Figure~\ref{fig:mnist} in the Appendix. 

The experiments above on genomic data generate outcomes $Y$ from a noisy linear model; in Figure~\ref{fig:mlp_vs_lin} in the Appendix we show that generating outcomes from a $2$-layer MLP with noise doesn't change the relative ordering of the algorithms by performance. 

\subsection{Limitations}
There are $3$ main criticisms that can be levied at the results in this paper. The first is that the reductions in error in the feature DP and label DP settings due to projection in Lines $3-5$ of Figure~\ref{tbl:summary} rely on sufficiently large $l: l > \frac{n}{\sqrt{d}}, l > n^{2/3}$ respectively. As a result, there are practical settings where $n$  is particularly large or $l$ is small where the theory does not suggest $\algotext{ReuseCovProj}$ will outperform $\algotext{ReuseCovGauss}$. The second critique is that this improvement requires us relaxing our notion of DP to label or feature DP. In settings where both $X$ and $Y$ are highly sensitive, if we want to guarantee full DP $\algotext{ReuseCovGauss}$ would be is our only option. Finally, all of our algorithms are variants of $\algotext{SSP}$ \cite{ssp1}, which has complexity that scales super-linearly in $d$ due to forming the covariance matrix $X^{T}X$. For high-dimensional regression settings algorithms like private SGD are preferred due to their scalability \cite{sgd}. With respect to the first critique, we note that in common settings, like in genomics, $d$ can be very large relative to $n$, and so $\frac{n}{\sqrt{d}}$ can be quite small in practice. For example, in the popular \cite{1KG} dataset used in the experiments, $n = 5008$, and $d = 78961$, and so $\frac{n}{\sqrt{d}} \approx 17$. Similiarly, if $n$ is of modest size, it is plausible that $l > n^{2/3}$. We also note that our experimental results show that the projection improves error at values of $l$ \emph{much} smaller than the theory would suggest, making this algorithm the practical choice for values of $l$ as small as $11$! Finally, the $\algotext{ReuseCovGuass}$ Algorithm is itself also a contribution of this work.  On the issue of scalability, in Subsection~\ref{subsec:complexity_sketch} we discuss how when $n > d > l$, the computational complexity of the $\algotext{ReuseCovGauss}$ algorithm is the cost of computing the covariance matrix $X^{T}X$, which is $O(nd^2)$, which we improve to $O(sd^2)$ if we use $s$ points to sketch the covariance matrix (Appendix~\ref{sec:sub_ssp}). Despite these improvements, the computational complexity of our methods is still super-linear in $d$. 

Obtaining versions of PRIMO that scale for large values of $d$, and that do not require relaxing full DP in order to improve the dependence on $l$ as we do in Section~\ref{sec:query_release}, are exciting directions for future work.


\clearpage

\bibliography{main}
\bibliographystyle{tmlr}
\clearpage

\section{Appendix} 
\subsection{Additional Related Work}
\paragraph{Query Release.}
In Subsection~\ref{sec:query_release} we show that privately computing the association term $\frac{1}{n}X^{T}Y$ is equivalent to the problem of differentially privately releasing a set of $l \cdot d$ low-sensitivity queries \cite{ada2}. While less is known about the optimal $l_2$ error for arbitrary low-sensitivity queries, it is clear that geometric techniques based on factorization and projection do not (at least obviously) apply, since there is no corresponding notion of the query matrix $A_\mathcal{Q}$ \cite{fac_mech}.  In the case where $X, Y$ are both private, this corresponds to releasing a subset of $l \cdot d$ $2$-way marginal queries over $d + l$ dimensions, which is well-studied \cite{2way_convex, 2way_fast, 2way_hard}. In the related work section of the Appendix we discuss why applying existing algorithms for private release of marginals seems unlikely be able to both improve over the Gaussian Mechanism and achieve non-trivial error. However, in the less restrictive but still practically relevant setting where the labels $Y$ are public, computing the association term $\frac{1}{n}X^{T}Y$ is equivalent to the problem of differentially privately releasing a special class of  $dl$ ``low-sensitivity" queries we term ``inner product queries" (Definition~\ref{def:inner_prod}).  For this special class of queries we can adapt the projected Gaussian Mechanism of \cite{adaptive_proj} that is optimal for linear queries under $l_2$ error in the sparse regime.  We are able to obtain greatly improved results over the naive Gaussian Mechanism in the regime where $l  > \frac{n}{\sqrt{d}}$, as summarised in Figure~\ref{tbl:summary} and presented in Subsection~\ref{sec:query_release}. This is a key regime of interest, particularly for genomic data, where $d$ the number of SNPs could be in the hundreds of millions, $n$ the population size in the hundreds of thousands ($\sim 400K$ for UK Biobank \cite{biobank} one of the most popular databases), and $l$ the number of recorded phenotypes could be in the thousands.

\paragraph{Projection mechanisms and Algorithm~\ref{alg:proj}.}Since we make heavy use of projection mechanisms in Subsection~\ref{sec:query_release} in the \emph{public label} setting, we elaborate on the difference between the existing methods and our Algorithm~\ref{alg:proj}. The most technically related work to our Algorithm~\ref{alg:proj} is \cite{adaptive_proj}, which is itself a variant of the projection mechanism of \cite{proj}. There are several key differences in our analysis and application.The projection in \cite{proj} is designed for linear queries over a discrete domain, and runs in time polynomial in the domain size. Algorithm~\ref{alg:proj} allows continuous $\mathcal{X}, \mathcal{Y}$ and runs in time polynomial in $n, d, l$. Like the relaxed projection mechanism \cite{adaptive_proj}, when our data is discrete we relax our data domain to be continuous in order to compute the projection more efficiently. Unlike in their setting which attempts to handle general linear queries, due to the linear structure of inner product queries our projection can be computed in polynomial time via linear regression over the $l_2$ ball, as opposed to solving a possibly non-convex optimization problem. Moreover, due to this special structure we can use the same geometric techniques as in \cite{proj} to obtain theoretical accuracy guarantees (Theorem~\ref{thm:proj-acc}). 
\paragraph{Private release of $2$-way marginals}
Consider the problem of privately releasing all $2$-way marginals over $n$ points in $\{0,1\}^{d + l}$. Theorem 5.7 in \cite{2way_convex} gives a polynomial time algorithm based on relaxed projections that achieves mean squared error $\tilde{O}(n\sqrt{d + l})$, which matches the best known information theoretic upper bound \cite{2way_convex}, although there is a small gap to the existing lower bound $\min(n, (d+l)^{2})$. This relaxed projection algorithm outperforms the Gaussian Mechanism when $n \sqrt{d+l} < ld  \implies n < \frac{dl}{\sqrt{d+l}} \implies d \sqrt{d+l} > n$. Since the mean squared error of Algorithm~\ref{alg:reusecov} that used this projection as a subroutine is at least $\frac{d\sqrt{d+l}}{n}$, this means that in the regime where the projection outperforms the Gaussian Mechanism, we do not achieve mean squared error $< 1$ in our regression. 

This suggests that, at least using the existing error analysis of $\algotext{SSP}$ from \cite{revisit}, it seems unlikely that by applying specialized algorithms for private release of $2$-way marginals to compute the $\frac{1}{n}X^{T}Y$ term subject to differential privacy in $(X, Y)$, e.g. the label private setting, we can both improve over the Gaussian Mechanism and achieve non-trivial error. 

\paragraph{Linear Queries under $l_\infty$-loss.}Beyond $2$-way marginals, the problem of privately releasing large numbers of linear queries (Definition~\ref{def:linear}) has been studied extensively. It is known that the worst case error is bounded by $\min(\frac{\sqrt{\log(|\mathcal{Q}|)}(\log|\mathcal{X}|\log(1/\delta))^{1/4}}{\sqrt{n\epsilon}}, \frac{\sqrt{|\mathcal{Q}|\log(1/\delta)}}{\epsilon n})$. The first term, which dominates in the so-called low-accuracy or ``sparse'' (\cite{proj}) regime, is achieved by the \emph{PrivateMultiplicativeWeights}  algorithm of \cite{pmw}, which is optimal over worst case workloads \cite{buv}. However, this algorithm has running time exponential in the data dimension, which is unavoidable \cite{complexity} over worst case $\mathcal{Q}$. The second term, which dominates for $n \gg \frac{|\mathcal{Q}|}{\log|\mathcal{Q}|\log|\mathcal{X}|^{1/4}}$, the ``high accuracy'' regime, is achieved by the simple and efficient Gaussian Mechanism \cite{privacybook}, which is also optimal over worst-case sets of queries $\mathcal{Q}$ \cite{buv}.

\paragraph{Linear Queries under $l_2$-loss.}For the $l_2$ error, in the high accuracy $n \gg |\mathcal{Q}|$ regime the factorization mechanism achieves error that is exactly tight for any workload of linear queries $\mathcal{Q}$ up to a factor of $\log(1/\delta)$, although it is not efficient (Theorem~ \cite{fac_mech}). In the low-accuracy regime, the algorithm of \cite{proj} that couples careful addition of correlated Gaussian noise (akin to the factorization mechanism) with an $l_1$-ball projection step achieves error within log factors of what is (a slight variant) of a quantity known as the \emph{hereditary discrepancy} $\text{opt}_{\epsilon, \delta}(A, n)$ (Theorem  \cite{proj}). This quantity is a known lower bound on the error of any $(\epsilon, \delta)$ mechanism for answering linear queries \cite{her_lb}, and so the upper bound is tight up to log factors in $|\mathcal{Q}|, |\mathcal{X}|$. Theorem $21$ in \cite{proj} analyzes the simple projection mechanism that adds independent Gaussian noise and projects rather than first performing the decomposition step that utilizes correlated Gaussian noise, achieving error $O(nd\log(1/\delta)\sqrt{\log|X|}/\epsilon)$, which matches the best known (worst case over $\mathcal{Q}$) upper bound for the sparse $n < d$ case \cite{ullman2}. In our Theorem~\ref{thm:proj-acc} we give such a universal upper bound, rather than one that depends on the hereditary discrepancy of the matrix $Y$. While the bound can of course be improved for a specific set of outcomes $Y$ by the addition of the decomposition step to the projection algorithm, we omit this step in favor of a simpler algorithm with more directly comparable bounds to existing private regression algorithms. 

While the information-theoretic $l_\infty$ or $l_2$ error achievable for linear queries is well-understood \cite{fac_mech, proj, buv}, as synthetic data algorithms like \emph{PrivateMultiplicativeWeights} and \emph{MedianMechanism}, or the factorization or projection mechanisms are in general inefficient, there are many open problems pertaining to developing efficient algorithms for specific query classes, or heuristic approaches that work better in practice. Examples of these approaches along the lines of the factorization mechanism \cite{hdmm, hdmm2},  efficient approximations of the projection mechanism \cite{adaptive_proj, 2way_convex}, and using heuristic techniques from distribution learning in the framework of iterative synthetic data algorithms \cite{pate_gan, dc_gan, dpgan, heuristics}. 

\paragraph{Sub-sampled Linear Regression.} 
In Subsection~\ref{sec:sub_ssp} we analyze $\algotext{SSP}$ where we first sub-sample a random set of $s$ points without replacement, and use this sub-sample to compute the noisy covariance matrix. Sub-sampled linear regression has been studied extensively absent privacy, where it is known that uniform sub-sampling is sub-optimal in that it produces biased estimates of the OLS estimator, and performs poorly in the presence of high-leverage points \cite{samp_vol_lev}. To address these shortcomings, techniques based on leverage score sampling \cite{samp_lev}, volume-based sampling \cite{samp_vol}\cite{samp_vol_lev}, and spectral sparsification \cite{samp_spectral} have been developed. Crucially, in these methods the probability of a point being sub-sampled is data-dependent, and so they are (not obviously) compatible with differential privacy. 
\subsection{Definitions}
Throughout the paper we make heavy use of common matrix and vector norms. For a vector $v \in \mathbb{R}^d$ and matrix $A \in \mathbb{R}^{d\times d}, ||v||_{A}^2 \defeq v^{t}Av, \normreg{A}_{2}^2 = \lambda_{\max}(A^{T}A), \norm{v}^2 = \sum_{k = 1}^{d}v_k^2, \normreg{A}_F^2 = \sum_{i,j \in [d]}a_{ij}^2$.

\begin{definition}{Statistical Query \cite{kearns}}
\label{def:linear}
Let $D \in \mathcal{X}^n$ a dataset. A linear query is a function $q: \mathcal{X} \to [0, 1]$, where 
$q(D) \defeq \frac{1}{n}\sum_{x_i \in D}^{n}q(x_i)$. 
\end{definition}

\begin{definition}{\cite{lower_bnd_2way}}
Let $\mathcal{X} = \{0,1\}^m$, and $\mathcal{Q}_k = \{q_{ij}\}_{1 \leq i_1 < i_2 < i_k \leq m},$ where $q_{ij}(x) \defeq \prod_{u = 1}^{k}x_{i_u}$. Then the class $\mathcal{Q}_k$ are called $k-$way marginals. 
\end{definition}

Lastly, we define a new class of ``low-sensitivity'' queries \cite{ada2} we call ``inner product'' queries that we will make heavy use of in Subsection~\ref{sec:query_release}.
\begin{definition}{Inner Product Query.}
\label{def:inner_prod}
Let $\mathcal{X} \subset \mathbb{R}^{d}$ and $X \in \mathcal{X}^n \subset \mathbb{R}^{d \times n}$ a dataset with column $j$ corresponding to $x_j \in \mathcal{X}$. Let $y \in \mathbb{R}^{n}$, and $i \in [d]$. Then the pair $(i, y)$ defines an inner product query $q_{(i, y)}: \mathcal{X}^n \to \mathbb{R}$, 
$$
q_{(i, y)}(X) \defeq \frac{1}{n}\sum_{j=1}^{n}X_{ij}y_{j} = \frac{1}{n}e_{i}^{T} X y,
$$
where $e_i$ is the $i^{th}$ basis vector in $\mathbb{R}^d$. 
\end{definition}
Changing one row $j$ of $X$, changes at most one summand $\frac{1}{n}X_{ij}y_{j}$ of $q_{(i, y)}(X)$, and hence the $l_2$ sensitivity 
$\Delta_2(q_{(i,y)}) \leq \frac{1}{n}|X_{ij}||y_j| \leq \frac{\infnormX \normY}{n}$, a fact we will use throughout.

\subsection{Lemmas}

\begin{lemma}[\cite{lem_proj}]\label{lem:proj_lemma}
Let $K \subset \mathbb{R}^d$ be a symmetric convex body, let $g \in K$, and $\tilde{g} = g + w$ for some $w \in \mathbb{R}^d$. Then if $\hat{g} = \text{argmin}_{g' \in K}\norm{\tilde{g}-g'}^2$, then
$$
\norm{\hat{g}-g}^2 \leq \min \{4\norm{w}^2, 4\normreg{w}_{K^{\circ}}\}
$$
\end{lemma}

\begin{lemma}{\cite{tropp}}
\label{lem:tropp1}
Let $Z_i \in \mathbb{S}_{d}^{+}$, and sample $Z_1, \ldots Z_s$ without replacement from $\{Z_1, \ldots Z_n\}$. Suppose $\mathbb{E}[Z_i] = I_{d}$, and $\max_{i \in [n]}\lambda_{\max}(Z_i) \leq B$. Then for $\delta = \sqrt{\frac{2B\log(2d/\rho)}{s}}$, with probability $1-\rho$:
\begin{align*}
\lambda_{\max}(\frac{1}{s}\sum_{i = 1}^{s}Z_i) < 1 + \delta, \; \; \lambda_{\min}(\frac{1}{s}\sum_{i = 1}^{s}Z_i)   > 1 - \delta 
\end{align*}
\end{lemma}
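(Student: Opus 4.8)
The plan is to establish this via the matrix Laplace transform method, reducing sampling without replacement to the i.i.d.\ (with replacement) case and then invoking the standard matrix Chernoff bound for sums of independent PSD matrices. Writing $S = \sum_{i=1}^{s} Z_i$ with the $Z_i$ drawn uniformly without replacement, I would control the upper tail of $\lambda_{\max}(S)$ and the lower tail of $\lambda_{\min}(S)$ separately; the two arguments are mirror images, the lower tail being obtained by running the same machinery with a negative Laplace parameter. Note that under uniform sampling each $Z_i$ is marginally uniform on $\{Z_1,\dots,Z_n\}$, so the hypothesis $\mathbb{E}[Z_i] = I_d$ is exactly the statement $\tfrac{1}{n}\sum_{j} Z_j = I_d$, which pins down the mean of the sum as $\mathbb{E}[S] = sI_d$.

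For the upper tail I would begin from the matrix Markov inequality: for any $\theta > 0$,
\[
\Pr[\lambda_{\max}(S) \geq s(1+\epsilon)] \leq e^{-\theta s(1+\epsilon)}\,\mathbb{E}\,\operatorname{tr}\exp(\theta S).
\]
The crucial step is to dominate the without-replacement trace-exponential moment generating function by its with-replacement counterpart,
\[
\mathbb{E}\,\operatorname{tr}\exp\!\Big(\theta \sum_{i=1}^{s} Z_i\Big) \;\leq\; \mathbb{E}\,\operatorname{tr}\exp\!\Big(\theta \sum_{i=1}^{s} Y_i\Big),
\]
where the $Y_i$ are drawn i.i.d.\ uniformly \emph{with} replacement. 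This reduction, due to Gross and Nesme and resting on the convexity of $A \mapsto \operatorname{tr}\exp(A)$, is the step I expect to be the main obstacle, since it is the only place the without-replacement structure genuinely enters and it is what licenses all the subsequent independence-based estimates.

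Once the reduction is in hand, I would apply Tropp's master bound: by Lieb's concavity theorem and subadditivity of matrix cumulant generating functions, $\mathbb{E}\operatorname{tr}\exp(\theta\sum_i Y_i) \leq \operatorname{tr}\exp\!\big(\sum_i \log \mathbb{E}\, e^{\theta Y_i}\big)$. Using $0 \preceq Y_i \preceq B I_d$ together with the scalar convexity estimate $e^{\theta z} \leq 1 + \tfrac{e^{\theta B}-1}{B} z$ on $[0,B]$ (which transfers to an operator inequality) and $\mathbb{E}[Y_i] = I_d$, I obtain $\mathbb{E}\,e^{\theta Y_i} \preceq \exp\!\big(\tfrac{e^{\theta B}-1}{B} I_d\big)$, and after optimizing over $\theta$ the usual bound
\[
\Pr[\lambda_{\max}(S) \geq s(1+\epsilon)] \leq d\,\Big[\tfrac{e^{\epsilon}}{(1+\epsilon)^{1+\epsilon}}\Big]^{s/B},
\]
with the symmetric estimate $d\,[e^{-\epsilon}/(1-\epsilon)^{1-\epsilon}]^{s/B}$ for the lower tail. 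Finally I would convert these into the stated deviation: in the small-deviation regime both bracketed quantities behave like $e^{-\epsilon^2/2}$, so each tail is at most $d\,e^{-s\epsilon^2/(2B)}$; dividing $S$ by $s$ to pass to $\tfrac{1}{s}S$ and setting $d\,e^{-s\epsilon^2/(2B)} = \rho/2$ solves to $\epsilon = \sqrt{2B\log(2d/\rho)/s} = \delta$. A union bound over the two tail events then gives that both $\lambda_{\max}(\tfrac{1}{s}S) < 1+\delta$ and $\lambda_{\min}(\tfrac{1}{s}S) > 1-\delta$ hold simultaneously with probability at least $1-\rho$, which is the claim.
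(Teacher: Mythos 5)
The paper never proves this lemma --- it is imported as Theorem~2.2 of \cite{tropp} --- so the only meaningful comparison is to the proof in that reference, and your sketch reconstructs exactly that argument: the matrix Markov bound via the trace exponential, the Gross--Nesme domination of the without-replacement trace moment generating function by its i.i.d.\ counterpart (which you correctly single out as the one place the sampling model genuinely enters), Lieb's concavity theorem to obtain the master bound, and the standard mgf estimate for bounded PSD matrices to reach the multiplicative Chernoff form. The one imprecision is in the final conversion: the lower-tail exponent $(1-\epsilon)\log(1-\epsilon)+\epsilon$ is indeed at least $\epsilon^2/2$, but the upper-tail exponent $(1+\epsilon)\log(1+\epsilon)-\epsilon$ equals $\epsilon^2/2-\epsilon^3/6+O(\epsilon^4)$ and is therefore \emph{at most} $\epsilon^2/2$, so the claimed bound $d\,e^{-s\epsilon^2/(2B)}$ for the $\lambda_{\max}$ tail only holds with $2$ replaced by $2(1+\epsilon/3)$ in the denominator. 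This perturbs the constant in $\delta$ for the upper tail by a $(1+o(1))$ factor as $s$ grows; the lemma as stated in the paper elides the same point, so your argument is faithful to the cited result up to this standard constant-level slack and I would not count it as a gap.
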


\begin{lemma}[folklore e.g. \cite{subsamp}]\label{lem:secret}
Given a dataset $\mathcal{X}^n$ of $n$ points and an $(\epsilon, \delta)-$DP mechanism $M$. Let the procedure \textbf{subsample} take a random subset of $s$ points from $\mathcal{X}^n$ without replacement. Then if $\gamma = s/n$, the procedure $M \circ \textbf{subsample}$ is $(O(\gamma \epsilon), \gamma \delta)-$DP for sufficiently small $\epsilon$.
\end{lemma}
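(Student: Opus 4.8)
The plan is to prove this as the standard \emph{privacy amplification by subsampling} result, via a conditioning argument on whether the distinguishing record survives the subsample. It is cleanest to first establish the bound for the add/remove neighboring relation and then recover the replace-one relation of Definition~\ref{dp} by composition. So take neighbors $D' = D \cup \{x'\}$ with $|D| = n$, and let $S$ be the random size-$s$ subset drawn uniformly without replacement, so that $x'$ is selected with probability $s/(n+1) \approx \gamma = s/n$. Conditioned on $x' \notin S$, the subsample is distributed \emph{exactly} as a uniform size-$s$ subset of $D$, i.e.\ identically to the subsample drawn from $D$ itself; conditioned on $x' \in S$, the subsample is $\{x'\}$ together with a uniform size-$(s-1)$ subset of $D$. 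This is the event decomposition that drives the whole argument.

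Writing $P_0$ for the common ``exclusion'' distribution (the without-replacement subsample of $D$), the output law of $M \circ \textbf{subsample}$ on $D'$ is the mixture $P_{D'} = (1-\gamma)P_0 + \gamma P_{\mathrm{incl}}$, while on $D$ it is simply $P_D = P_0 = (1-\gamma)P_0 + \gamma P_0$. The two mixtures share the component $P_0$, which is the key structural fact. Coupling $P_{\mathrm{incl}}$ to $P_0$ through the shared choice of the remaining $s-1$ indices, the two underlying subsampled datasets differ in exactly one record, so the $(\epsilon,\delta)$-DP guarantee of $M$ gives $D_{e^{\epsilon}}(P_{\mathrm{incl}} \,\|\, P_0) \le \delta$, where $D_{\alpha}(P\|Q) = \sup_{o}\!\big(P(o) - \alpha Q(o)\big)$ is the hockey-stick divergence. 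I would then invoke the \emph{advanced joint convexity} of $D_\alpha$, which exploits precisely the shared $P_0$ term: with $\epsilon' = \log\!\big(1 + \gamma(e^{\epsilon}-1)\big)$ it yields $D_{e^{\epsilon'}}(P_{D'}\,\|\,P_D) = \gamma\, D_{e^{\epsilon}}(P_{\mathrm{incl}} \,\|\, P_0) \le \gamma\delta$, and a symmetric computation handles the reverse direction. This certifies $(\epsilon', \gamma\delta)$-DP, and linearizing for small $\epsilon$ via $\log(1+u) \le u$ and $e^{\epsilon}-1 \le 2\epsilon$ gives $\epsilon' \le 2\gamma\epsilon = O(\gamma\epsilon)$, matching the claim; this is exactly where the ``sufficiently small $\epsilon$'' hypothesis enters.

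The step I expect to be the main obstacle is obtaining genuine amplification rather than a vacuous bound. A naive application of ordinary joint convexity to the mixture $P_{D'} = (1-\gamma)P_0 + \gamma P_{\mathrm{incl}}$ returns $\epsilon' = \epsilon$ with no improvement, so the argument must go through advanced joint convexity, and that in turn requires that the exclusion event produce \emph{literally the same} distribution from both neighbors, $P_0$. This common-component structure is what forces the add/remove relation to be the natural one. The delicate bookkeeping is then reconciling this with the replace-one relation used in Definition~\ref{dp}: I would decompose a single substitution into a removal followed by an addition and compose the two $\gamma$-amplified steps (cf.\ \cite{subsamp, privacybook}), which changes only the hidden constant in $O(\gamma\epsilon)$ and leaves the $\gamma\delta$ term intact.
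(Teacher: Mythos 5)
The paper never actually proves Lemma~\ref{lem:secret}: it is stated as folklore with a citation and invoked as a black box in the privacy argument for Algorithm~\ref{alg:sub_ssp}, so there is no in-paper proof to compare against. Judged on its own terms, your argument is a correct rendition of the standard amplification-by-subsampling proof: the decomposition of the subsample law into the exclusion component $P_0$ (which is \emph{identical} for the two neighbors) and the inclusion component, the coupling of $P_{\mathrm{incl}}$ to $P_0$ through the shared $s-1$ indices so that the coupled subsamples differ in a single record, the conversion of $D_{e^{\epsilon}}(P_{\mathrm{incl}}\,\|\,P_0)\le\delta$ into $\bigl(\log(1+\gamma(e^{\epsilon}-1)),\,\gamma\delta\bigr)$-DP, and the linearization for small $\epsilon$. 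Three small remarks. First, advanced joint convexity is heavier machinery than this particular instance requires: because $P_D$ equals $P_0$ exactly, the one-line computation $\Pr[M(S(D'))\in o]\le(1-\gamma)\Pr[M(S(D))\in o]+\gamma\bigl(e^{\epsilon}\Pr[M(S(D))\in o]+\delta\bigr)=(1+\gamma(e^{\epsilon}-1))\Pr[M(S(D))\in o]+\gamma\delta$ already yields the claim; the advanced inequality only becomes necessary when both neighbors produce genuinely different mixtures, as in a direct substitute-one analysis. Second, the reverse inequality is not literally symmetric, since $P_D$ is not a nontrivial mixture; it needs the separate lower bound $\Pr[M(S(D'))\in o]\ge(1-\gamma+\gamma e^{-\epsilon})\Pr[M(S(D))\in o]-\gamma\delta$, which still lands in $(O(\gamma\epsilon),O(\gamma\delta))$. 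Third, in the add-one direction the inclusion probability is $s/(n+1)$ rather than $s/n$, and reducing from add/remove back to the substitute-one relation of Definition~\ref{dp} doubles the constants; both discrepancies are absorbed by the $O(\cdot)$ in the statement. None of these affects correctness.
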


The following lemma is used repeatedly in analyzing the accuracy of all $\algotext{SSP}$ variants. 
\begin{lemma}
\label{lem:sherman}
Let $A, B$ invertible matrices in $\mathbb{R}^{n\times n}$, and $v, c$  vectors $\in \mathbb{R}^{n}$. 

Then 
$$ A^{-1}v - (A+B)^{-1}(v + c) = (A+B)^{-1}BA^{-1}v  - (A+B)^{-1}c $$

\end{lemma}
\begin{proof}
Expanding we get that: 
$$ A^{-1}v - (A+B)^{-1}(v + c) = (A^{-1} - (A+B)^{-1})v - (A+B)^{-1}c, $$
so it suffices to show that: 

$$(A^{-1} - (A+B)^{-1})v = (A+B)^{-1}BA^{-1}v$$
Now the Woodbury formula tells us that $(A+B)^{-1} = A^{-1} - (A + AB^{-1}A)^{-1},$ hence 
$$(A^{-1} - (A+B)^{-1})v = (A + AB^{-1}A)^{-1}v = (A(I + B^{-1}A))^{-1}v$$
Then since: 
\begin{equation*}
    (A(I + B^{-1}A))^{-1} = (I + B^{-1}A)^{-1}A^{-1} = (B^{-1}(B + A))^{-1A^{-1}} = (B+A)^{-1}BA^{-1},
\end{equation*} 
we are done. 
\end{proof}

\subsection{Proofs from Subsection~\ref{sec:query_release}}
\label{subsec:app_query}
\paragraph{\textbf{Computing the association term.}}
Consider entry $(k, j)$ of $\frac{1}{n}X^{T}Y$ for $k \in [d], j \in [l]$, $a_{kj} = (\frac{1}{n}X^{T}Y)_{kj} = e_k\frac{1}{n}X^{T}y_j = q_{(k, y_j)}(X)$, $q_{(k, y_j)}$ is the inner product query of Definition~\ref{def:inner_prod}. Thus privately computing $\frac{1}{n}X^{T}Y$, is equivalent to privately releasing answers to $dl$ inner product queries $\{q_{(k, y_j)}(X)\}_{k \in [d], j \in [l]}$.

It will be convenient for us to write $q_{(k, y_j)}(X)$ as a single inner product. Let $\vecx = (x_{11}, \ldots, x_{1n}, x_{21}, \ldots x_{d1}, \ldots x_{dn}) \in \mathbb{R}^{nd}$, and given $y = \vecx$, let $\text{mat}(y) = X$. Denote by $c_{kj} \in \mathbb{R}^{nd}$ the vector that has all zeros except in positions $(k-1)n + 1, \ldots kn$ it contains $\frac{1}{n}y_{1j}, \ldots \frac{1}{n}y_{nj}$. Then it is clear that $q_{(k, y_j)}(X) = c_{kj}^{T}\vecx$, so if we let $C \in (\frac{1}{n}\mathcal{Y})^{dl \times dn}$ be the matrix with row $kj \in [dl]$ equal to $c_{kj}$, then $\frac{1}{n}X^{T}Y = C\cdot \vecx$.

\textbf{Proof of Theorem~\ref{thm:ssp-query-acc-label}:}
\begin{proof}
We start with our usual expansion of $f(\hatw)-f(\wstar)$, up until Equation~\ref{eq:penult}, we have with probability $1-\rho$ for every $i \in [l]$:

\begin{equation}
n \cdot f(\hatw)-f(\wstar) =  \tilde{O}\left( \frac{d}{\lambda_{\min} + \lambda} \norm{\wstar}^2 (\normX^4/\epsilon^2)\log(2d^2/\rho) + 
\lambda \norm{\wstar}^2 + \frac{1}{\lambda_{\min} + \lambda}\normreg{E_{2i}}^2_2\right)
\end{equation}
Aggregating over $i$ and rearranging gives: 

\begin{equation}
  \frac{n}{l}\sum_{i = 1}^{l}f(\hatw)-f(\wstar) = \tilde{O}\left( \frac{d}{\lambda_{\min} + \lambda} \norm{\tilde{w}}^2 (\normX^4/\epsilon^2)\log(2d^2/\rho) + 
\lambda \norm{\tilde{w}}^2 + \frac{1}{\lambda_{\min} + \lambda}\frac{1}{l}\sum_{i = 1}^{l}\normreg{E_{2i}}^2_2\right)  , 
\end{equation}
where $\norm{\tilde{w}}^2 = \frac{1}{l}\sum_{i = 1}^{l}\norm{\wstar}^2 = \frac{1}{l}\normreg{W^{*}}_F^2$. Then by Theorem~\ref{thm:proj-acc}, we have with $\frac{1}{l}\sum_{i = 1}^{l}\normreg{E_{2i}}^2_2 = O\left(c(\epsilon, \delta)\sqrt{\log(2/\rho)}n\sqrt{d}\normY^2 \normX^2 \right)$ with probability $1-\rho$. So with probability $1-2\rho$, we have $\frac{n}{l}\sum_{i = 1}^{l}f(\hatw)-f(\wstar) = $
\begin{equation}
  \tilde{O}\left( \frac{d}{\lambda_{\min} + \lambda} \norm{\tilde{w}}^2 (\normX^4/\epsilon^2)\log(2d^2/\rho) + 
\lambda \norm{\tilde{w}}^2 + \frac{1}{\lambda_{\min} + \lambda}c(\epsilon, \delta)\sqrt{\log(2/\rho)}n\sqrt{d}\normY^2 \normX^2\right) 
\end{equation}

Finally optimizing over $\lambda$ gives the desired result: 
$$
\alpha = \tilde{O}\left(\norm{\tilde{w}} \sqrt{\frac{d \norm{\tilde{w}}^2 (\normX^4/\epsilon^2)\log(2d^2/\rho)}{n^2} + \frac{c(\epsilon, \delta)\sqrt{\log(2/\rho)}\sqrt{d}\normY^2 \normX^2}{n}}\right)
$$ 
\end{proof}

\subsection{Proofs from Subsection~\ref{sec:sub_ssp}}
\textbf{Proof of Theorem~\ref{thm:sub_ssp-acc}}:
\begin{proof}
Our analysis will hinge on the case where $l = 1$ e.g. that of standard private linear regression, which we will extend to the PRIMO case by our choice of $\epsilon$ as in the proof of Theorem~\ref{thm:ssp-acc-full}. The fact that the Algorithm is $(O(\epsilon), \delta)$ private follows immediately from the Gaussian mechanism, and the secrecy of the sub-sample lemma (Lemma~\ref{lem:secret}), which is why we can set $\epsilon_1 = \frac{n}{s}\epsilon/2$ in Line $2$. We proceed with the accuracy analysis. 

Define:
\begin{itemize}
    \item $\wstar = (X^{T}X)^{-1}X^{T}Y$: the least squares estimator as before
    \item $\wstar^\lambda = (\frac{1}{n}X^{T}X + \lambda I)^{-1}\frac{1}{n}X^{T}Y$: the ridge regression estimator 
    \item $w_s = (\frac{1}{s}X_S^{T}X_S + \lambda I)^{-1}(\frac{1}{n}X^{T}Y)$: the sub-sampled least squares estimator
    \item $\tilde{w}_s = (\frac{1}{s}X_S^{T}X_S + E_1 + \lambda I)^{-1}(\frac{1}{n}X^{T}Y + E_2)$:  differentially private estimate of $w_s$
\end{itemize}

We note that $0 \leq f(\wstar^\lambda)-f(\wstar) \leq \lambda \left(  \norm{\wstar}^2-\norm{\wstar^\lambda}^2  \right ) \leq \lambda \norm{\mathcal{W}}^2$. Then by the Lemma ~\ref{lem:prederror} and Cauchy-Schwartz with respect to the norm $\normreg{\cdot}_{\frac{X^{T}X}{n}}$:

\begin{multline}\label{eq:error_decomp_ss}
|f(\tilde{w}_s) - f(\wstar)| = \normreg{\tilde{w}_s - \wstar}_{\frac{X^{T}X}{n}}^2 \leq \\
3 \normreg{\wstar - \wstar^{\lambda}}_{\frac{X^{T}X}{n}}^2 + 3 \normreg{\wstar^{\lambda}-w_s}_{\frac{X^{T}X}{n}}^2 + 3 \normreg{w_s - \tilde{w}_s}_{\frac{X^{T}X}{n}}^2 \leq  \\
3 \lambda \norm{\mathcal{W}}^2 +  3||w_s - \wstar^\lambda||_{\frac{X^{T}X}{n}}^2 + 3||\tilde{w}_s - w_s||_{\frac{X^{T}X}{n}}^2  
\end{multline}
Lemmas~\ref{lem:bnd_1}, \ref{lem:bnd_2} bound these terms with high probability. 

\begin{lemma}\label{lem:use_tropp}\label{lem:bnd_1}
Under the assumption $\normX = O(n\lambda)$, then w.p. $1-\rho/2$: 
$$ 
||\wstar^{\lambda} - w_s||_{\frac{X^{T}X}{n}} = O\left(\frac{\normX \normY \log(2d/\rho)}{\lambda s}\right)  
$$
\end{lemma}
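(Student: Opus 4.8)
The plan is to reduce the difference $\wstar^{\lambda}-w_s$ to a single application of the resolvent identity, isolate the \emph{relative} spectral deviation between the full and the sub-sampled covariance, and control that deviation with the matrix Chernoff bound for sampling without replacement (Lemma~\ref{lem:tropp1}). Write $A \defeq \frac{1}{n}\covx + \lambda I$, $A_S \defeq \covs + \lambda I$, and $b \defeq \frac{1}{n}X^{T}Y$, so that $\wstar^{\lambda}=A^{-1}b$ and $w_s = A_S^{-1}b$ share the identical right-hand side $b$. The resolvent identity then gives $\wstar^{\lambda}-w_s = A^{-1}(A_S-A)A_S^{-1}b = A^{-1}(\covs-\tfrac{1}{n}\covx)A_S^{-1}b$. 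Since $\frac{1}{n}\covx \preceq A$, the target norm is dominated by the $A$-norm, so it suffices to bound $\norm{A^{1/2}(\wstar^{\lambda}-w_s)}_2 = \norm{A^{-1/2}(\covs-\tfrac{1}{n}\covx)A_S^{-1}b}_2$.

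Next I would factor out the multiplicative deviation. Setting $M \defeq A^{-1/2}A_S A^{-1/2}$, we have $\covs-\frac{1}{n}\covx = A_S-A = A^{1/2}(M-I)A^{1/2}$, so the quantity above equals $\norm{(M-I)A^{1/2}A_S^{-1}b}_2 \le \norm{M-I}_2\,\norm{A^{1/2}A_S^{-1}}_2\,\norm{b}_2$. To control $\norm{M-I}_2$ I would apply Lemma~\ref{lem:tropp1} to the matrices $Z_i \defeq A^{-1/2}(x_i x_i^{T}+\lambda I)A^{-1/2}$: their uniform average over $[n]$ is $A^{-1/2}(\frac{1}{n}\covx+\lambda I)A^{-1/2}=I$, so $\mathbb{E}[Z_i]=I_d$, and drawing $s$ of them without replacement produces exactly $\frac{1}{s}\sum_{i\in S}Z_i = A^{-1/2}A_S A^{-1/2}=M$. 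The per-sample bound required by the lemma is $B = \max_i \lambda_{\max}(Z_i)\le (\normX^2+\lambda)/\lambda$, since $x_i x_i^{T}+\lambda I \preceq (\normX^2+\lambda)I$ and $A^{-1}\preceq \lambda^{-1}I$; this is precisely where the hypothesis $\normX = O(n\lambda)$ enters, keeping $B$ controlled so that the resulting deviation $\delta = \sqrt{2B\log(2d/\rho)/s}$ is small for $s$ large enough. Lemma~\ref{lem:tropp1} then gives $\norm{M-I}_2 \le \delta$ with probability $1-\rho/2$.

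On that same event I would bound the two remaining factors. From $\norm{M-I}_2 \le \delta$ we obtain $A_S = A^{1/2}MA^{1/2} \succeq (1-\delta)A$, hence $A^{1/2}A_S^{-2}A^{1/2} \preceq \lambda^{-1}A^{1/2}A_S^{-1}A^{1/2} \preceq \lambda^{-1}(1-\delta)^{-1}I$ (using $A_S^{-2}\preceq \lambda^{-1}A_S^{-1}$), so that $\norm{A^{1/2}A_S^{-1}}_2 \le 1/\sqrt{\lambda(1-\delta)}$. The right-hand side satisfies $\norm{b}_2 = \norm{\tfrac{1}{n}X^{T}Y}_2 \le \normX\normY$ in the scalar ($l=1$) case. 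Combining the three factors yields $\norm{\wstar^{\lambda}-w_s}_{\frac{1}{n}\covx} = O(\delta\,\normX\normY/\sqrt{\lambda})$; substituting $\delta=\sqrt{2B\log(2d/\rho)/s}$ and simplifying under the stated hypothesis on $\normX$ collapses this to the claimed order $O(\normX\normY\log(2d/\rho)/(\lambda s))$, and the $l>1$ case follows from the privacy-budget split already used in Theorem~\ref{thm:ssp-acc-full}.

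\textbf{Main obstacle.} The delicate point is the factor $\norm{A^{1/2}A_S^{-1}}_2$, which could blow up if $A_S$ shrinks relative to $A$ in some direction: it must be controlled by the \emph{same} multiplicative event $\norm{M-I}_2\le\delta$ that bounds $\norm{M-I}$ itself. Obtaining a relative (rather than additive) spectral guarantee is exactly why one normalizes by $A^{-1/2}$ before invoking matrix Chernoff, and why controlling the per-sample bound $B$ — through the hypothesis $\normX=O(n\lambda)$ — is essential to force $\delta<1$ so that $A_S$ remains a constant-factor approximation of $A$ despite the possible singularity of $\covx$.
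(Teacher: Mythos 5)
Your proposal follows essentially the same route as the paper: the same resolvent identity $\wstar^{\lambda}-w_s = A^{-1}(A_S-A)A_S^{-1}b$, the same whitening by $A^{-1/2}$ to reduce everything to the multiplicative deviation $\norm{I-A^{-1/2}A_SA^{-1/2}}_2$, the same application of Lemma~\ref{lem:tropp1} to $Z_i = A^{-1/2}(x_ix_i^{T}+\lambda I)A^{-1/2}$, and the same control of the $A_S^{-1}$ factor through $\lambda$ together with $\norm{b}\leq \normX\normY$. The paper packages this as a direct bound on the quadratic form $v^{T}(\Sigma_s^{-1}-\Sigma^{-1})\Sigma(\Sigma_s^{-1}-\Sigma^{-1})v$, but the ingredients are identical.

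There is one bookkeeping slip in your last step. With $\delta=\sqrt{2B\log(2d/\rho)/s}$, your bound $O(\delta\,\normX\normY/\sqrt{\lambda})$ equals $O\bigl(\normX\normY\sqrt{B\log(2d/\rho)/(\lambda s)}\bigr)$, which scales as $s^{-1/2}$; it does \emph{not} collapse to the claimed rate $O(\normX\normY\log(2d/\rho)/(\lambda s))$, which scales as $s^{-1}$. The reconciliation is that the lemma, as it is actually invoked in Equation~\ref{eq:error_decomp_ss}, bounds the \emph{squared} $\frac{1}{n}X^{T}X$-norm: in the paper's quadratic-form computation the deviation $\norm{I-\Sigma^{-1/2}\Sigma_s\Sigma^{-1/2}}_2$ enters twice and contributes $\delta^2 = O(\log(2d/\rho)/s)$. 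Squaring your (correct) bound on the unsquared norm recovers exactly this. So your argument is sound and matches the paper's; only the final sentence misstates the exponent, and you should present the result as a bound on $\norm{\wstar^{\lambda}-w_s}_{\frac{1}{n}X^{T}X}^2$ to match how it is consumed downstream.
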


\begin{lemma}\label{lem:bnd_2}
Under the assumption $\normX = O(n\lambda)$, then w.p. $1-\rho/2$: 
\begin{equation}
    \normreg{w_s - \tilde{w}_s}_{\frac{X^{T}X}{n}}  = O( \frac{\norm{X}^4}{n^2\epsilon^2}\norm{\mathcal{W}}^2 \cdot \frac{d}{\lambda}\frac{\log(2d/\rho)}{s} +  \frac{l\norm{\mathcal{X}}^2\norm{\mathcal{Y}}^2}{n^2\epsilon^2} \cdot \frac{d}{\lambda}\frac{\log(2d/\rho)}{s})
\end{equation}
\end{lemma} 

Then by Equation~\ref{eq:error_decomp_ss} and Lemmas~\ref{lem:bnd_1}, \ref{lem:bnd_2}, 
we get that w.p. $1-\rho$: 

\begin{multline}
f(\tilde{w}_s) - f(\wstar) = \\
O\left(\lambda \norm{\wstar}^2 + \frac{\normX \normY }{\lambda}(\frac{\log(2d/\rho)}{ s})+  \frac{\norm{X}^4n^2}{s^4\epsilon^2}\norm{\mathcal{W}}^2 \cdot \frac{d}{\lambda}(1+\frac{\log(2d/\rho)}{s}) \; + \; \frac{l\norm{\mathcal{X}}^2\norm{\mathcal{Y}}^2}{n^2\epsilon^2} \cdot \frac{d}{\lambda}(1+\frac{\log(2d/\rho)}{s}) \right), 
\end{multline}
Summing over $i$ and minimizing over $\lambda$ we set $$\lambda = \frac{\sqrt{\normX \normY ( \frac{\log(2d/\rho)}{ s}) + \frac{\norm{X}^4n^2}{s^4\epsilon^2}\norm{\mathcal{W}}^2 \cdot \frac{d}{\lambda}(1+\frac{\log(2d/\rho)}{s}) \; + \; \frac{l\norm{\mathcal{X}}^2\norm{\mathcal{Y}}^2}{n^2\epsilon^2} \cdot \frac{d}{\lambda}(1+\frac{\log(2d/\rho)}{s})}}{\sqrt{1}{l}\normreg{W}_F},$$ which completes the result.
\end{proof}

\textbf{Proof of Lemma~\ref{lem:use_tropp}}: 
\begin{proof}
Now: $$ ||\wstar^{\lambda} - w_s||_{\frac{1}{n}X^{T}X}^2 = n||\wstar^{\lambda} - w_s||_{\frac{1}{n}X^{T}X}^2  \leq n||\wstar^{\lambda} - w_s||_{\frac{1}{n}X^{T}X + \lambda I}^2$$
We will focus on $||\wstar^{\lambda} - w_s||_{\frac{1}{n}X^{T}X + \lambda I}^2$. 
Let $\Sigma = \frac{1}{n}X^{T}X + \lambda I$, $\Sigma_s = \frac{1}{s}\sum_{j \in S}x_s x_s^{T} + \lambda I$, and $v = \frac{1}{n}X^{T}Y$.
Expanding $\normreg{\wstar^{\lambda} - w_s}_{\Sigma} =$
\begin{equation}\label{}
(\Sigma_s^{-1}v - \Sigma^{-1}v)^{T}\Sigma (\Sigma_s^{-1}v - \Sigma^{-1}v) =
v^{T}\overbrace{(\Sigma_s^{-1}-\Sigma^{-1})\Sigma(\Sigma_s^{-1}-\Sigma^{-1})}^{A}v = v^{T}Av
\end{equation}
Now since $A$ is Hermitian, we know $||v||_{A} \leq ||v|| ||A||_{2}$. Since $||v|| \leq \normX \normY$, it suffices to bound $\normreg{A}_{2}$ with high probability. Noting that $A = (\Sigma_s^{-1}-\Sigma^{-1})\Sigma(\Sigma_s^{-1}-\Sigma^{-1}) = 
\Sigma_s^{-1}\Sigma^{1/2}(I - \Sigma^{-1/2}\Sigma_s\Sigma^{-1/2})(I - \Sigma^{-1/2}\Sigma_s\Sigma^{-1/2})\Sigma^{1/2}\Sigma_s^{-1}$, we have by the sub-multiplicativity of the operator norm:

\begin{multline}\label{eq:a_decomp}
||A||_{2} \leq \left( \normreg{\Sigma_S^{-1}\Sigma^{1/2}}_{2}^2 \right) \cdot \left(\normreg{I - \Sigma^{-1/2}\Sigma_s\Sigma^{-1/2}}_{2}^2 \right) \leq \\ 
\frac{\lambda_{\max}(\Sigma)}{\lambda^2} \cdot \left(\normreg{I - \Sigma^{-1/2}\Sigma_s\Sigma^{-1/2}}_{2}^2 \right)
\end{multline}

Now consider $\Sigma^{-1/2}\Sigma_s\Sigma^{-1/2} = \Sigma^{-1/2}\frac{1}{s}\sum_{j \in S}(x_s x_s^{T} + \lambda I)\Sigma^{-1/2} = \frac{1}{s}\sum_{j \in s}Z_i$. Then note that $\mathbb{E}[Z_i] = \Sigma^{-1/2}(\frac{1}{n}\mathbb{E}[x_ix_i^{T}] + \lambda I)\Sigma^{-1/2} = \Sigma^{-1/2}\Sigma\Sigma^{-1/2} = I$, and that $\lambda_{max}(Z_i) \leq \normreg{\Sigma^{-1}}_2 \normreg{\frac{1}{n}x_ix_i^{T} + \lambda I}_2 \leq 1 + \frac{ \normX}{n\lambda}$.

Now we can bound $\normreg{I-\frac{1}{s}\sum_{j \in s}Z_i}_2$ by Theorem~2.2 in \cite{tropp}:

\begin{lemma}{\cite{tropp}}
\label{lem:tropp1}
Let $Z_1, \ldots Z_s$ sampled without replacement from $\{Z_1, \ldots Z_n\}$. Then if $Z_i \in \mathbb{S}_{d}^{+}, \mathbb{E}[Z_i] = I_{d}$, and $\max_{i \in [n]}\lambda_{\max}(Z_i) \leq B$ w.p. $1-\rho$, for $\delta = \sqrt{\frac{2B\log(2d/\rho)}{s}}$:
\begin{align*}
\lambda_{\max}(\frac{1}{s}\sum_{i = 1}^{s}Z_i) < 1 + \delta, \; \; \lambda_{\min}(\frac{1}{s}\sum_{i = 1}^{s}Z_i)   > 1 - \delta 
\end{align*}
\end{lemma}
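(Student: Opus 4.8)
The plan is to prove the bound through the matrix Laplace transform (Chernoff) method, reducing the without-replacement sampling to the independent case via a convexity comparison. I would first observe that $\lambda_{\max}(\tfrac{1}{s}\sum_{i=1}^s Z_i) = \tfrac{1}{s}\lambda_{\max}(\sum_{i=1}^s Z_i)$, so it suffices to control $\lambda_{\max}$ and $\lambda_{\min}$ of the unnormalized subset sum. For the upper tail, Markov's inequality applied to the largest eigenvalue of the matrix exponential gives
$$
\Pr\left[\lambda_{\max}\left(\sum_{i=1}^s Z_i\right) \geq s(1+\delta)\right] \leq \inf_{\theta > 0} e^{-\theta s(1+\delta)}\, \mathbb{E}\left[\text{tr}\exp\left(\theta \sum_{i=1}^s Z_i\right)\right],
$$
so the whole problem collapses to bounding the expected trace exponential of the random subset sum.

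The crux, and the step I expect to be the main obstacle, is that the summands are dependent because we sample without replacement, so the usual subadditivity machinery does not apply directly. The key technical input is the comparison inequality of Gross and Nesme: if $Y_1,\ldots,Y_s$ are drawn i.i.d. uniformly from $\{Z_1,\ldots,Z_n\}$ and $Z_{(1)},\ldots,Z_{(s)}$ denotes the without-replacement sample, then
$$
\mathbb{E}\left[\text{tr}\exp\left(\theta \sum_{i=1}^s Z_{(i)}\right)\right] \leq \mathbb{E}\left[\text{tr}\exp\left(\theta \sum_{i=1}^s Y_i\right)\right].
$$
Establishing this domination is the delicate part: it rests on the convexity of the trace exponential in the summand matrices together with the exchangeability of the sample, which lets one compare the averaging over without-replacement draws term-by-term against the i.i.d. draws. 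I would invoke this reduction and then run the standard independent-summand analysis on the right-hand side.

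Once reduced to i.i.d. summands, I would apply subadditivity of matrix cumulant generating functions (a consequence of Lieb's concavity theorem), giving
$$
\mathbb{E}\left[\text{tr}\exp\left(\theta \sum_{i=1}^s Y_i\right)\right] \leq \text{tr}\exp\left(s \log \mathbb{E}\left[e^{\theta Y_1}\right]\right).
$$
Using the operator bound $e^{\theta Z} \preceq I + \tfrac{e^{\theta B}-1}{B}Z$, valid whenever $0 \preceq Z \preceq B I$ by convexity of $z \mapsto e^{\theta z}$ on $[0,B]$, and taking expectations with $\mathbb{E}[Y_1] = I$, yields $\mathbb{E}[e^{\theta Y_1}] \preceq \bigl(1 + \tfrac{e^{\theta B}-1}{B}\bigr)I$, and hence $\log \mathbb{E}[e^{\theta Y_1}] \preceq \tfrac{e^{\theta B}-1}{B}I$ via $\log(1+x) \leq x$. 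Substituting gives the scalar bound $d\exp\bigl(s\tfrac{e^{\theta B}-1}{B}\bigr)$ for the trace exponential.

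Finally I would optimize over $\theta$ to recover the classical Chernoff form $\Pr[\lambda_{\max} \geq 1+\delta] \leq d\,[e^{\delta}/(1+\delta)^{1+\delta}]^{s/B}$, and run the symmetric argument on $-\sum_i Z_i$ (equivalently the lower Laplace transform) to obtain $\Pr[\lambda_{\min} \leq 1-\delta] \leq d\,[e^{-\delta}/(1-\delta)^{1-\delta}]^{s/B}$ for the lower tail. Bounding each bracket by $e^{-\delta^2/2}$ in the relevant regime and asking each tail to have probability at most $\rho/2$, a union bound over the two directions absorbs a factor of $2$ into the logarithm; solving $d\,e^{-s\delta^2/(2B)} \leq \rho/2$ for $\delta$ then recovers exactly $\delta = \sqrt{2B\log(2d/\rho)/s}$, completing the argument.
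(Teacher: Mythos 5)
The paper itself offers no proof of this lemma --- it is imported wholesale from Tropp (Theorem 2.2 of the cited reference), so the right comparison is against Tropp's own argument, and your reconstruction matches it faithfully: the matrix Laplace transform method, the Gross--Nesme trace-exponential domination to pass from without-replacement to i.i.d.\ sampling (which is exactly the device Tropp uses, and you correctly identify it as the crux --- without it the Lieb/subadditivity machinery fails on dependent summands), the Bernstein-type operator bound $e^{\theta Z} \preceq I + \frac{e^{\theta B}-1}{B}Z$ for $0 \preceq Z \preceq BI$, and the Chernoff optimization yielding the brackets $\bigl[e^{\delta}/(1+\delta)^{1+\delta}\bigr]^{s/B}$ and $\bigl[e^{-\delta}/(1-\delta)^{1-\delta}\bigr]^{s/B}$. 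All of these steps are correct as stated, including the use of $\mathbb{E}[Y_1]=I$ so that $\mu_{\min}=\mu_{\max}=s$. The one inaccuracy is in the final simplification: the bound $e^{-\delta}/(1-\delta)^{1-\delta} \leq e^{-\delta^2/2}$ is valid for the lower tail, but for the upper tail one only has $e^{\delta}/(1+\delta)^{1+\delta} \leq e^{-\delta^2/(2+\delta)}$ (so $e^{-\delta^2/3}$ on $\delta \leq 1$, never $e^{-\delta^2/2}$), so solving $d\,e^{-s\delta^2/(2B)} \leq \rho/2$ does not literally certify the upper-tail branch at the stated $\delta = \sqrt{2B\log(2d/\rho)/s}$. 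This is a constant-factor slack, not a structural gap --- either restrict to the regime $\delta \leq 1$ and absorb the $2/3$ into the constant, or note that the lemma as quoted in the paper carries the same cosmetic looseness relative to Tropp's exact brackets; everything else in your argument is sound.
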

So by Lemma~\ref{lem:tropp1}, we know that with probability $1-\rho$: $|\lambda_{\min}(I - \Sigma^{-1/2}\Sigma_s\Sigma^{-1/2})|  = 1 - \lambda_{\max}(\Sigma^{-1/2}\Sigma_s\Sigma^{-1/2}) \leq \delta$, and similarly $|\lambda_{\max}(I - \Sigma^{-1/2}\Sigma_s\Sigma^{-1/2})| \leq \delta$, thus $||I -\Sigma^{-1/2}\Sigma_s\Sigma^{-1/2})||_2 \leq \delta$. Substituting this all into Equation~\ref{eq:a_decomp} and noting $\lambda_{\max}(\Sigma) \leq \frac{\normX}{n} + \lambda$ we get with probability $1-\rho$: 
\begin{align}
\normreg{\wstar^\lambda - w_s}_{\frac{1}{n}X^{T}X} \leq \normreg{\tilde{w}_s - w_s}_{\Sigma} \leq \normX \normY n \normreg{A}_{2} \leq  \normX \normY \frac{\lambda_{\max}(\Sigma)}{\lambda^2}\delta^2 = \\
2\normX \normY \frac{(\frac{\normX}{n} + \lambda)^2}{\lambda^3} \frac{\log(2d/\rho)}{s},
\end{align}
which under the assumption $\normX = O(n\lambda)$ gives $O(\frac{\normX \normY \log(2d/\rho)}{\lambda s})$.
as desired.
\end{proof}

\textbf{Proof of Lemma~\ref{lem:bnd_2}:}
\begin{proof}
By Lemma~\ref{lem:sherman}, with $A = \frac{1}{s}X_S^{T}X + \lambda I$, $B = E_1, c = E_2, v = \frac{1}{n}^{T}Y$, we get $w_s - \tilde{w}_s = (\frac{1}{s}X_S^{T}X + \lambda I + E_1)E_1w_s - (\frac{1}{s}X_S^{T}X + \lambda I + E_1)^{-1}E_2$, and so 
$$
\normreg{w_s - \tilde{w}_s}_{\frac{X^{T}X}{n}}^2 \leq 2\normreg{(\frac{1}{s}X_S^{T}X + \lambda I + E_1)^{-1}E_1w_s}_{\frac{X^{T}X}{n}}^2 + 2\normreg{(\frac{1}{s}X_S^{T}X + \lambda I + E_1)^{-1}E_2}_{\frac{X^{T}X}{n}}^2
$$
Under the assumption $||E_1||_{2} \leq \lambda/2,$  this becomes 
\begin{multline}
  \normreg{w_s - \tilde{w}_s}_{\frac{X^{T}X}{n}}^2 = O\left(\normreg{E_1 w_s}_{(\covs + \lambda I)^{-1}(X^{T}X/n + \lambda I)(\covs + \lambda I)^{-1}}\right) + \\
  O\left(\normreg{E_2}_{(\covs + \lambda I)^{-1}(X^{T}X/n + \lambda I)(\covs + \lambda I)^{-1}}\right)
\end{multline}
Now to apply Lemma~\ref{lem:jl}, we need to bound
\begin{multline}
    \text{Tr}((\covs + \lambda I)^{-1}(X^{T}X/n + \lambda I)(\covs + \lambda I)^{-1}) \leq d \lambda_{\max}(\Sigma_S)^{-1}(\Sigma)(\Sigma_S)^{-1}) = \\
    d\lambda_{\max}(\Sigma^{-1/2}(\Sigma^{1/2}\Sigma_S^{-1}\Sigma^{1/2})^2\Sigma^{-1/2})\leq \\
    d\lambda_{\max}(\Sigma^{-1})\frac{1}{\lambda_{\min}(\Sigma^{-1/2}\Sigma^{1/2}\Sigma_S^{-1}\Sigma^{1/2})}^2 \leq \frac{d}{\lambda}(\frac{1}{1-\delta})^2,
\end{multline}
where the last inequality follows from Lemma~\ref{lem:tropp1}. Applying Lemma~\ref{lem:jl} we get that with probability $1-2\rho$:
\begin{multline}
    ||\wstar^{\lambda} - w_s||_{X^{T}X/n} = O\left( \sigma_1^2 \cdot \frac{d}{\lambda}(\frac{1}{1-\delta})^2 \cdot \norm{w_s}^2\log(2d^2/\rho) + \sigma_2^2 \cdot \frac{d}{\lambda}(\frac{1}{1-\delta})^2\log(2d^2/\rho)\right),
\end{multline}

From Lemma~\ref{lem:tropp1}, $\delta = \sqrt{\frac{2(1+ \frac{\normX}{n\lambda})\log(2d/\rho)}{s}}$, which under the assumption $\normX = O\left(n\lambda\right)$ gives $\frac{1}{(1-\delta)}^2 = O(1 + \frac{\log(2d/\rho)}{s})^2$. Substituting in the value of $\sigma_1, \sigma_2$ gives: 

$$
 \frac{\norm{X}^4n^2}{s^4\epsilon^2}\norm{\mathcal{W}}^2 \cdot \frac{d}{\lambda}(1+\frac{\log(2d/\rho)}{s}) \; + \; \frac{l\norm{\mathcal{X}}^2\norm{\mathcal{Y}}^2}{n^2\epsilon^2} \cdot \frac{d}{\lambda}(1+\frac{\log(2d/\rho)}{s}),
$$ 
as desired.
\end{proof}

\subsection{Computational Complexity}
\label{sec:complex}
We continue the discussion of how to efficiently compute the projection step in Algorithm~\ref{alg:proj} by exploiting properties of the Kronecker product. Recall that $C = I_d \otimes \frac{1}{n}Y_T$ where $\otimes$ denotes the Kronecker product. Then if $L \Lambda V^T$ denotes the SVD of $Y^T$, standard properties of the Kronecker product imply that the spectral decomposition of $C^TC$ is: 

\begin{align}\label{eq:kron}
C &= \frac{1}{n} \cdot (I_d \otimes L)(I_d \otimes \Lambda)(I_d \otimes V^{T}) \implies \\
C^{T}C &= (I_d \otimes V)(I_d \otimes \Lambda^2)(I_d \otimes V^{T})
\end{align}
Hence we can compute $\text{SPEC}(C)$ in the time it takes to compute $\text{SVD}(Y)$, or $O(nl \min(n,l))$. Similarly, to efficiently compute the $U^{T}b$ term required for Lemma~2.2 \cite{hager} we can again take advantage of properties of the Kronecker product, $U^{T}b = $
\begin{align*}
  (I_d \otimes V)^{T}2C^{T}\tilde{g} &= 2(I_d \otimes V^{T})(I_d \otimes \frac{1}{n}Y)\tilde{g}  = \\
  2(I_d \otimes \frac{1}{n}V^{T}y)\tilde{g} &= \text{vec}(\frac{2}{n}V^{T}Y\text{mat}(\tilde{g})),
\end{align*}
where $\text{mat}(\tilde{g})$ is the $l \times d$ matrix with row $i$ given by elements $(l(i-1)+1, \ldots, l(i-1) + l$ of $\tilde{g}$, and the last equality follows properties of the Kronecker product. Now $V^{T}Y = \Lambda U^{T}$ which can be computed in $O(ln)$ since $\Lambda$ is diagonal. Multiplying by $\text{mat}(\tilde{g})$ can be done in another $O(nld)$, for total complexity of $O(nl\max(\min(n,l),d)).$

\begin{algorithm}[H]{$\lambda-\algotext{ReuseCov}$}
\caption{Input:  $\lambda, \mathcal{X} \in \mathcal{X}^n \subset \mathbb{R}^{d \times n}$, $Y = [y_1, \ldots y_{l}] \in \mathcal{Y}^{l \times n},$ privacy params: $\epsilon, \delta$. We denote by $\mathcal{B}$ the Algorithm in Lemma 2.2 \cite{hager}}
\label{alg:reusecov_emp}
\begin{algorithmic}[1]
\State Draw $E_1 \sim N_{d(d+1)/2}(0, \sigma_1^2),$ where $\sigma_1 =  \frac{1}{n} 2\sqrt{2 \log(2.5/\delta)}\normX^2/\epsilon$
\State Compute $\hat{I} = (\frac{1}{n}X^{T}X + E_1 + \lambda I)$
\State Compute the QR decomposition $\hat{I} = QR$
\State Draw $\hat{v} = [\hat{v}_1, \ldots \hat{v}_l] \sim \algotext{GaussMech}(\epsilon/2, \delta/2, \Delta =  \frac{1}{n}\sqrt{l}\normX \normY)$
\State Compute $\text{SVD}(Y^{T}) = U\Lambda V^{T}$
\State Compute $\hat{g} = \Pi_{C(Y)}\hat{v} = \mathcal{B}(V, \Lambda, \hat{v})$
\For {$i = 1 \ldots l$}
    \State Solve $R\hat{w_i} = Q^{T}\hat{g}_i$ by back substitution.
\EndFor   
\State Return $\hat{W} = [\hat{w_1}, \ldots \hat{w_{l}}]$
\end{algorithmic}
\end{algorithm}

\subsection{Sketching \algotext{ReuseCov}}
\label{sec:sub_ssp}
Theorem~\ref{thm:complex} shows that when $n > d > l$, the complexity of Algorithm~\ref{alg:reusecov_emp} is $O(nd^2)$ or the cost of forming the covariance matrix. In this section we show how sub-sampling $s < n$ points can improve this to $O(sd^2)$ by giving an analysis of sub-sampled SSP. The key ingredient is marrying the convergence of the sub-sampled covariance matrix to $\covx$ with the accuracy analysis of SSP we saw in Section~\ref{sec:ssp}.

Our algorithm is based on the observation that if we sub-sample $S \subset [n], |S| = s$ points without replacement then: 
\begin{itemize}
    \item The cost of computing the covariance matrix $\Sigma_S = \sum_{k \in S}x_k x_k^{T}$ is $O(sd^2)$
    \item By the ``secrecy of the sub-sample" principle \cite{privacybook}, our privacy cost for estimating $\Sigma_S$ is scaled down by a factor of $s/n$
    \item With high probability for sufficiently large $s$, $\Sigma_S \to \Sigma$ by a matrix-Chernoff bound for sampling without replacement \cite{tropp}
\end{itemize}

 \begin{algorithm}[H]{$\lambda-\algotext{SubSampReuseCov}$}
\caption{Input:  $s, \lambda, \mathcal{X} \in \mathcal{X}^n \subset \mathbb{R}^{d \times n}$, $Y = [y_1, \ldots y_{l}] \in \mathcal{Y}^{l \times n},$ privacy params: $\epsilon, \delta$}
\label{alg:sub_ssp}
\begin{algorithmic}[1]
\State Sub-sample $s$ points without replacement from $\mathcal{X}$, we denote the sub-sampled design matrix by $X_S$. 
\State Let $(\epsilon_1, \delta_1) = (\frac{n}{s}\epsilon/2, \delta/2)$
\State Draw $E_1 \sim N_{d(d+1)/2}(0, \sigma_1^2),$ where $\sigma_1 =  \frac{1}{n} 2\sqrt{2 \log(2.5/\delta)}\normX^2/\epsilon_1$
\State Compute $\hat{I_s} = (\frac{1}{s}X_S^{T}X_S + E_1 + \lambda I)$
\State Draw $\hat{v} = [\hat{v}_1, \ldots \hat{v}_l] \sim \algotext{GaussMech}(\epsilon/2, \delta/2, \Delta =  \frac{1}{n}\sqrt{l}\normX \normY)$
\For {$i = 1 \ldots l$}
    \State Set $\hat{w_i} = \hat{I_s}^{-1}\hat{v}_i$
\EndFor   
\State Return $\hat{W} = [\hat{w_1}, \ldots \hat{w_{l}}]$
\end{algorithmic}
\end{algorithm}

\begin{theorem}
\label{thm:sub_ssp-acc}
With $\M = \algotext{GaussMech}(\epsilon/2, \delta/2, \Delta =  \frac{1}{n}\sqrt{l}\normX \normY)$, Algorithm~\ref{alg:sub_ssp} is an $(\alpha, \rho, O(\epsilon), \delta)$ solution to the PRIMO problem with $\alpha^2 = $
\begin{equation} 
\begin{split}
 O(\norm{\hat{w}}^2\normX^2 \normY^2 ( \frac{\log(2d/\rho)}{ s}) + \frac{d}{\lambda}(1+\frac{\log(2d/\rho)}{s}) \cdot  \\
 (\frac{\norm{X}^4n^2}{s^4\epsilon^2}\norm{\mathcal{W}}^2 + \frac{l\norm{\mathcal{X}}^2\norm{\mathcal{Y}}^2}{n^2\epsilon^2}))
\end{split}
\end{equation}
\end{theorem}

\paragraph{Proof Sketch.}
Our analysis will hinge on the case where $l = 1$ e.g. that of standard private linear regression, which we will extend to the PRIMO case by our choice of $\epsilon$ as in the proof of Theorem~\ref{thm:ssp-acc-full}. Now let:
\begin{itemize}
    \item $\wstar = (X^{T}X)^{-1}X^{T}Y$ the least squares estimator as before
    \item $\wstar^\lambda = (\frac{1}{n}X^{T}X + \lambda I)^{-1}\frac{1}{n}X^{T}Y$ the ridge regression estimator 
    \item $w_s = (\frac{1}{s}X_S^{T}X_S + \lambda I)^{-1}(\frac{1}{n}X^{T}Y)$ the sub-sampled least squares estimator
    \item $\tilde{w}_s = (\frac{1}{s}X_S^{T}X_S + E_1 + \lambda I)^{-1}(\frac{1}{n}X^{T}Y + E_2)$ our differentially private estimate of $w_s$
\end{itemize}

We note that $0 \leq f(\wstar^\lambda)-f(\wstar) \leq \lambda \left(  \norm{\wstar}^2-\norm{\wstar^\lambda}^2  \right ) \leq \lambda \norm{\mathcal{W}}^2$. Then by the Lemma ~\ref{lem:prederror} and Cauchy-Schwartz with respect to the norm $\normreg{\cdot}_{\frac{X^{T}X}{n}}$:

\begin{multline}\label{eq:error_decomp_ss}
|f(\tilde{w}_s) - f(\wstar)| = \normreg{\tilde{w}_s - \wstar}_{\frac{X^{T}X}{n}}^2 \leq \\
3 \normreg{\wstar - \wstar^{\lambda}}_{\frac{X^{T}X}{n}}^2 + 3 \normreg{\wstar^{\lambda}-w_s}_{\frac{X^{T}X}{n}}^2 + 3 \normreg{w_s - \tilde{w}_s}_{\frac{X^{T}X}{n}}^2 \leq  \\
3 \lambda \normreg{\mathcal{W}}^2 +  \underbrace{3||w_s - \wstar^\lambda||_{\frac{X^{T}X}{n}}^2}_{\text{Matrix Chernoff}} + \underbrace{3||\tilde{w}_s - w_s||_{\frac{X^{T}X}{n}}^2 }_{\text{SSP analysis} + \text{Matrix Chernoff}}
\end{multline}

So it suffices to bound each term with high probability. The second term, $||\tilde{w}_s - w_s||_{X^{T}X + \lambda I}$ can be bounded using the same arguments as in Theorem~\ref{thm:ssp-acc-full}, with small differences due to scaling. Crucially though, as we need to bound this in the norm induced by $X^{T}X$ rather than $X_S^{T}X_S$, we will need to utilize the convergence of $X_S^{T}X_S \to \covx$ via Matrix-Chernoff bounds. 

\begin{lemma}\label{lem:bnd_2}
Under the assumption $\normX = O(n\lambda)$, then w.p. $1-\rho/2$: 
\begin{equation}
\label{eq:sub}
    \normreg{w_s - \tilde{w}_s}_{\frac{X^{T}X}{n}}  = O( \frac{\norm{X}^4}{n^2\epsilon^2}\norm{\mathcal{W}}^2 \cdot \frac{d}{\lambda}\frac{\log(2d/\rho)}{s} +  \frac{l\norm{\mathcal{X}}^2\norm{\mathcal{Y}}^2}{n^2\epsilon^2} \cdot \frac{d}{\lambda}\frac{\log(2d/\rho)}{s})
\end{equation}
\end{lemma} 

Bounding the first term can be reduced to bounding $\normreg{I - (\frac{1}{n}\covx)^{-1/2}(\frac{1}{s}X_S^{T}X_S)(\frac{1}{n}\covx)^{-1/2}}_2$ which follows  more directly via  the Matrix-Chernoff bound for sub-sampling without replacement:
\begin{lemma}\label{lem:use_tropp}\label{lem:bnd_1}
Under the assumption $\normX = O(n\lambda)$, then w.p. $1-\rho/2$: 
$$ 
||\wstar^{\lambda} - w_s||_{\frac{X^{T}X}{n}} = O\left(\frac{\normX \normY \log(2d/\rho}{\lambda s}\right)  
$$
\end{lemma}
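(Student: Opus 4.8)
The plan is to exploit the fact that $\wstar^\lambda$ and $w_s$ are built from the \emph{same} right-hand side $b \defeq \frac{1}{n}X^{T}Y$ and differ only in which covariance matrix is inverted, so that their discrepancy is governed entirely by how far $\covs$ is from $\frac{1}{n}\covx$. Writing $\Sigma \defeq \frac{1}{n}\covx$ and $\Sigma_S \defeq \covs$, and applying the resolvent identity $A^{-1}-B^{-1}=A^{-1}(B-A)B^{-1}$ (the same manipulation underlying Lemma~\ref{lem:sherman}) gives
\begin{equation*}
\wstar^\lambda - w_s = \left[(\Sigma+\lambda I)^{-1} - (\Sigma_S+\lambda I)^{-1}\right]b = (\Sigma+\lambda I)^{-1}(\Sigma_S-\Sigma)\,w_s,
\end{equation*}
using $(\Sigma_S+\lambda I)^{-1}b = w_s$. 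Measuring in the $\Sigma$-norm and inserting $\Sigma^{1/2}\Sigma^{-1/2}$ around $\Sigma_S-\Sigma$, I would bound
\begin{equation*}
\norm{\wstar^\lambda - w_s}_{\frac{X^{T}X}{n}} \leq \norm{\Sigma^{1/2}(\Sigma+\lambda I)^{-1}\Sigma^{1/2}}_2\cdot\norm{I - \Sigma^{-1/2}\Sigma_S\Sigma^{-1/2}}_2\cdot\norm{w_s}_{\frac{X^{T}X}{n}},
\end{equation*}
where the first factor is at most $1$ (its eigenvalues are $\mu/(\mu+\lambda)\le 1$). This is exactly the reduction flagged before the statement: everything now rests on the relative spectral deviation $\norm{I - \Sigma^{-1/2}\Sigma_S\Sigma^{-1/2}}_2$, together with a crude bound on $\norm{w_s}_\Sigma$.

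For the second factor I would invoke the matrix Chernoff bound for sampling \emph{without} replacement \cite{tropp}. Since $\Sigma = \mathbb{E}[\Sigma_S]$ under uniform sub-sampling and each summand $x_k x_k^{T}$ has spectral norm at most $\normX^2$, Tropp's bound yields $\norm{I - \Sigma^{-1/2}\Sigma_S\Sigma^{-1/2}}_2 = O\!\left(\frac{\log(2d/\rho)}{s}\right)$ in the relevant deviation regime, with probability $1-\rho/2$; this is where the $\log(2d/\rho)/s$ factor in the statement originates, and where the hypothesis $\normX = O(n\lambda)$ enters, by controlling the effective condition number (the ratio of the summand bound to the smallest relevant eigenvalue) that governs Tropp's rate. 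For $\norm{w_s}_\Sigma$ I would pass to the $\Sigma_S$-norm via the same relative bound ($\Sigma \preceq (1-\eta)^{-1}\Sigma_S$) and then apply spectral calculus to $w_s = (\Sigma_S+\lambda I)^{-1}b$: since $\nu/(\nu+\lambda)^2 \le 1/(4\lambda)$ and $\norm{b}_2 = \norm{\frac{1}{n}X^{T}Y}_2 \le \normX\normY$, this gives $\norm{w_s}_\Sigma = O(\normX\normY/\sqrt{\lambda})$. Chaining the three factors produces the claimed $O\!\left(\frac{\normX\normY\log(2d/\rho)}{\lambda s}\right)$.

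The main obstacle is the conditioning issue hidden in the relative deviation term. The clean reduction above silently assumes $\Sigma$ is invertible, so that $\Sigma^{-1/2}$ and the multiplicative comparison $\Sigma_S \succeq (1-\eta)\Sigma$ make sense; but in the ill-conditioned regime we explicitly target, $\Sigma$ may be rank-deficient. The careful version must therefore either restrict the entire argument to the range of $\Sigma$, or replace the bare $\Sigma^{-1/2}$ by the regularized $(\Sigma+\lambda I)^{-1/2}$ and re-derive the spectral comparison — and this is precisely the point at which the assumption $\normX = O(n\lambda)$ is needed, to keep $\Sigma+\lambda I$ spectrally comparable to the unregularized quantity and to keep Tropp's failure probability and deviation at the stated rate. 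A secondary bookkeeping difficulty is tracking the three distinct norms ($\Sigma$, $\Sigma+\lambda I$, and $\Sigma_S$) and converting between them using only the single high-probability event $\{\norm{I - \Sigma^{-1/2}\Sigma_S\Sigma^{-1/2}}_2 \le \eta\}$, so that the final bound holds with probability $1-\rho/2$ without a union over several events.
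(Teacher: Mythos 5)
Your strategy is essentially the paper's: both arguments reduce the discrepancy between $\wstar^\lambda$ and $w_s$ to the relative spectral deviation $\norm{I - \Sigma^{-1/2}\Sigma_S\Sigma^{-1/2}}_2$ (with $\Sigma, \Sigma_S$ the $\lambda$-regularized covariances, which is exactly the fix you propose for the rank-deficiency issue) and control that deviation with the matrix Chernoff bound for sampling without replacement (Lemma~\ref{lem:tropp1}). The only structural difference is packaging: the paper writes $\wstar^\lambda - w_s = (\Sigma_S^{-1}-\Sigma^{-1})v$ and bounds the quadratic form $v^{T}Av$ with $A = (\Sigma_S^{-1}-\Sigma^{-1})\Sigma(\Sigma_S^{-1}-\Sigma^{-1})$, using $\norm{\Sigma_S^{-1}\Sigma^{1/2}}_2^2 \le \lambda_{\max}(\Sigma)/\lambda^2$ where you instead bound $\norm{w_s}_\Sigma$; these are interchangeable.

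There is, however, a concrete quantitative slip. Lemma~\ref{lem:tropp1} gives $\norm{I-\Sigma^{-1/2}\Sigma_S\Sigma^{-1/2}}_2 \le \delta = \sqrt{2B\log(2d/\rho)/s}$ with $B = 1 + \normX/(n\lambda) = O(1)$ under the stated assumption, i.e.\ a deviation of order $\sqrt{\log(2d/\rho)/s}$, not the $O(\log(2d/\rho)/s)$ you assert (the linear-in-$1/s$ rate is only available in the large-deviation regime where the multiplicative comparison is vacuous). The first power of $\log(2d/\rho)/s$ in the lemma's conclusion arises because the deviation enters the paper's bound \emph{squared}: $A$ sandwiches $(I-\Sigma^{-1/2}\Sigma_S\Sigma^{-1/2})^2$, and the quantity actually being bounded is the squared norm $\norm{\wstar^\lambda - w_s}_{\covx/n}^2$, which is how the lemma is consumed in Equation~\ref{eq:error_decomp_ss}. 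Your decomposition is linear in the deviation, so chaining your three factors with the correct Tropp rate yields $\norm{\wstar^\lambda-w_s}_\Sigma = O\left(\normX\normY\lambda^{-1/2}\sqrt{\log(2d/\rho)/s}\right)$, whose square is the intended bound; as written, the product $1 \cdot O(\log(2d/\rho)/s)\cdot O(\normX\normY/\sqrt{\lambda})$ does not match the claimed $O(\normX\normY\log(2d/\rho)/(\lambda s))$ either (it is off by a factor of $\sqrt{\lambda}$). The structure of your argument is sound, but you need to correct the deviation rate and fix the squared-versus-unsquared bookkeeping to land on the stated bound.
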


Substituting into Equation~\ref{eq:sub} and minimizing over $\lambda$ gives the desired result.

\subsection{Additional Experimental Results}
\label{sec:app_exp}

\textbf{Realizability of outcomes has a minimal effect.}

\begin{figure}[!h]
\centering
\begin{subfigure}[c]{.45\textwidth}
\includegraphics[width=\textwidth]{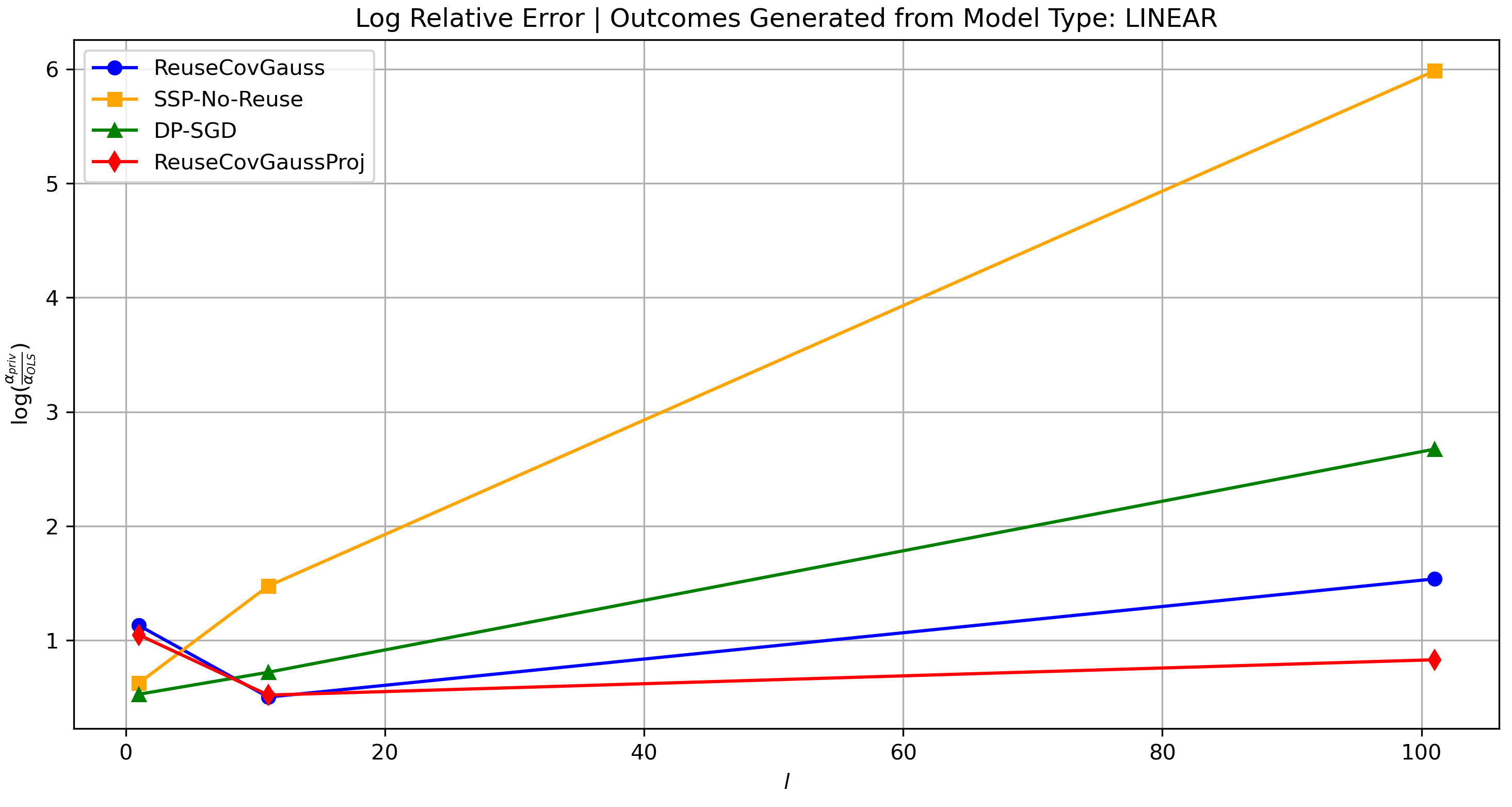}
\caption{}
\end{subfigure}
\begin{subfigure}[c]{.45\textwidth}
\includegraphics[width=\textwidth]{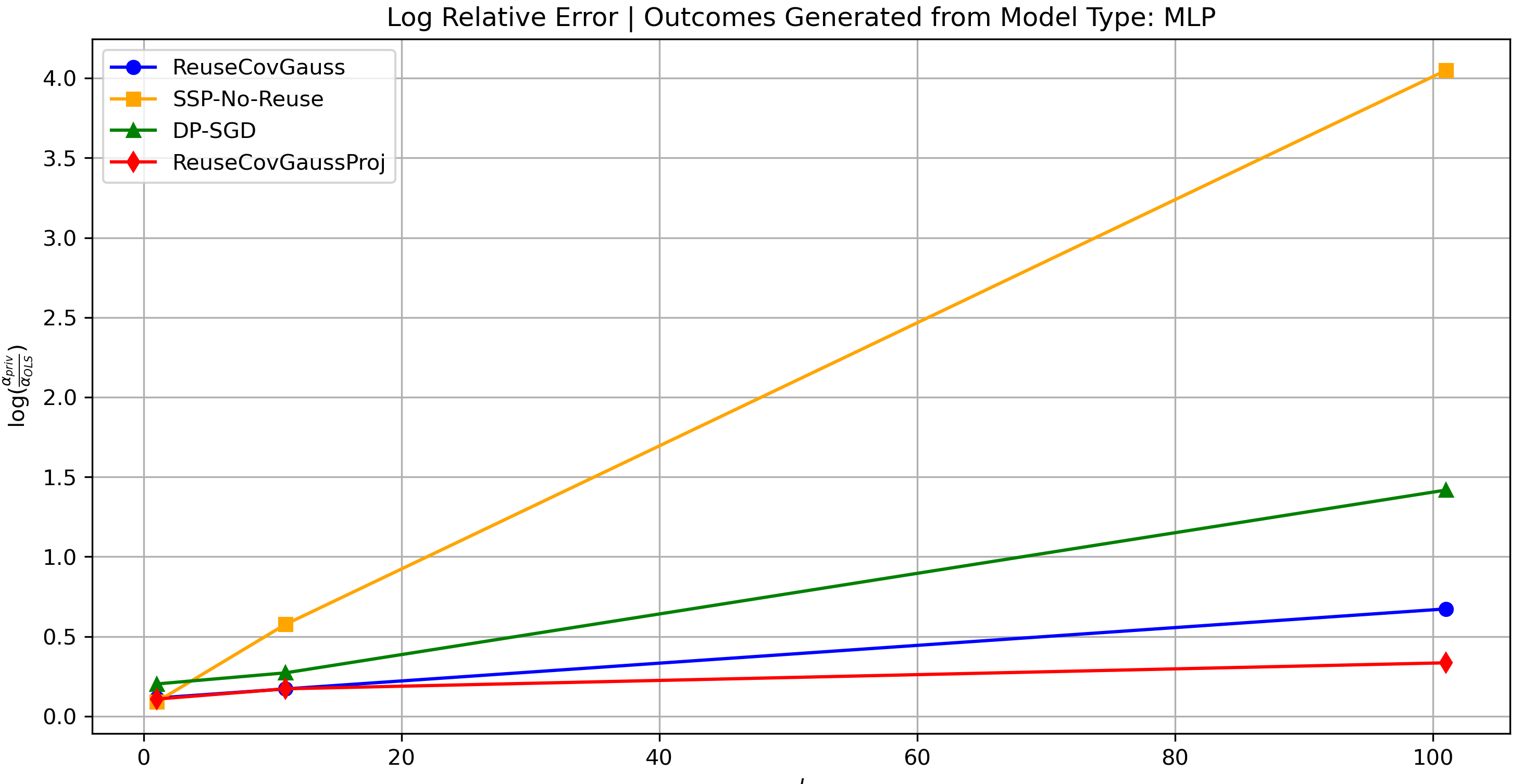}
\caption{}
\end{subfigure}
\caption{Both (a) and (b) show the log of the ratio of the squared loss of the private estimator to the square loss of the OLS estimator, but in (a) the outcomes are synthetically generated from a linear model, whereas in (b) the outcomes are generated from a 2-layer neural network, to test how the algorithms perform when the outcomes are not generated by a linear model. All values are averaged over $5$ iterations, over $l =  (1, 11, 101)$ with $(\epsilon, \delta) = (5,\frac{1}{n^2})$ and $n = 5000, d  = 25$ on a synthetic dataset with Gaussian features.}
\label{fig:mlp_vs_lin}
\end{figure}

\textbf{DP-SGD outperforms \algotext{ReuseCov} for $l < \frac{n}{\sqrt{d}}$}. 
We construct our dataset by sub-sampling $n = 5000$ images of dimension $d = 784$ from MNIST \cite{mnist}, and generating $l = 1, 11, 101$ outcomes from a noisy linear model with unit norm. We implement DP-SGD using the Opacus \cite{opacus} library from Meta, and do privacy accounting across regressions by composing in zCDP \cite{} and then converting back to $(\epsilon, \delta)-\text{DP}$. Figure~\ref{fig:mnist} shows that \algotext{DP-SGD} outperforms our \algotext{ReuseCov} algorithms when $l \in (1, 11, 101)$, although, in agreement with the theory, this gap shrinks as $l$ increases. 

\begin{figure}
    \centering
    \includegraphics[width=.55\textwidth]{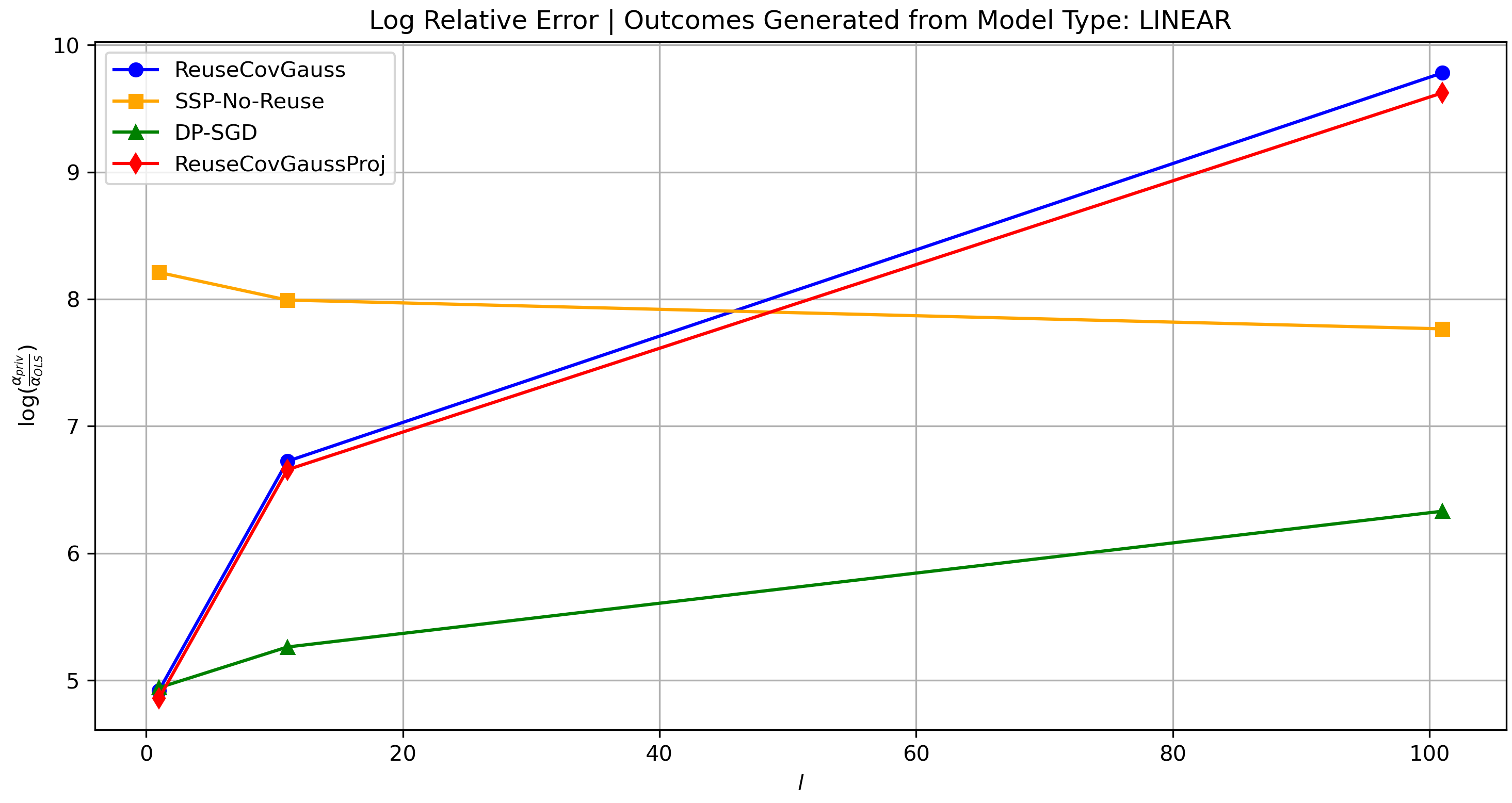}
    \caption{We construct our MNIST dataset by sampling $n = 5000$ random training images $(d = 784)$, and then generating synthetic labels from a random linear model. All values are averaged over $5$ iterations, over $l =  (1, 11, 101)$ with $(\epsilon, \delta) = (5,\frac{1}{n^2})$.}
    \label{fig:mnist}
\end{figure}

\end{document}